\newtheorem{theorem}{Theorem}
\newtheorem{lemma}{Lemma}
\newtheorem{definition}{Definition}
\newcommand{\ignore}[1]{}
\newtheorem{algorithm}{Algorithm}
\newtheorem{corollary}{Corollary}
\newcommand{\Tlopt}{$T_{lopt}$\xspace}
\newcommand{\ie}{i.\,e.\xspace}
\newcommand{\eg}{e.\,g.\xspace}
\newcommand{\Tmax}{T_{\max}}
\newcommand{\ORDER}{ORDER\xspace}
\newcommand{\MAJORITY}{MAJORITY\xspace}
\newcommand{\currentNode}{{\tt currentNode}\xspace}
\newcommand{\xsubibar}{$\bar{x}_i$\xspace}
\newcommand{\xsubi}{$x_i$\xspace}
\newcommand{\oneonegp}{(1+1)~GP\xspace}
\newcommand{\oneonegps}{(1+1)~GP*\xspace}
\newcommand{\oneonegpsingle}{(1+1)~GP-single\xspace}
\newcommand{\oneonegpmulti}{(1+1)~GP-multi\xspace}
\newcommand{\oneonegpssingle}{(1+1)~GP*-single\xspace}
\newcommand{\oneonegpsmulti}{(1+1)~GP*-multi\xspace}
\newcommand{\hvlMutateTwoPointOh}{HVL-Mutate$^{\mathbf{\prime}}$\xspace}
\begin{document}
\title{Computational Complexity Analysis of Simple Genetic Programming On Two Problems Modeling Isolated Program Semantics}

\author{Greg Durrett\\
Computer Science and Artificial Intelligence Laboratory\\
Massachusetts Institute of Technology\\
Cambridge, MA, USA
\and Frank Neumann\\
Algorithms and Complexity\\
Max Planck Institute for Informatics\\
 Saarbr{\"u}cken, Germany
\and
Una-May O'Reilly\\
Computer Science and Artificial Intelligence Laboratory\\
Massachusetts Institute of Technology\\
Cambridge, MA, USA
}


\maketitle
\begin{abstract}
Analyzing the computational complexity of evolutionary algorithms (EAs) for binary search spaces has significantly informed our understanding of EAs in general. With this paper, we start the computational complexity analysis of genetic programming (GP). We set up several simplified GP algorithms and analyze them on two separable model problems, ORDER and MAJORITY, each of which captures a relevant facet of typical GP problems. Both analyses give first rigorous insights into aspects of GP design, highlighting in particular the impact of accepting or rejecting neutral moves and the importance of a local mutation operator.
\end{abstract}




\newpage
\section{Introduction}

Because of the complexity of genetic programming (GP) variants and the challenging nature of the problems they address, it is arguably impossible in most cases to make formal guarantees about the number of fitness evaluations needed for an algorithm to find an optimal solution. Current theoretical approaches investigate foundational aspects of GP tangential to this goal, such as schema theories, search spaces, bloat and problem difficulty \cite{polietal2010}.  However, in this work, we instead choose to follow the path taken for evolutionary algorithms working on fixed-length binary strings. Initial work on pseudo-Boolean functions illustrated the working principles of simple evolutionary algorithms (see \eg \cite{Mue1992,RudBook,DJWoneone}); subsequently, results have been derived for a wide range of classical combinatorial optimization problems such as shortest paths, maximum matchings or minimum spanning trees~(see \eg \cite{BookNeuWit}). These studies have contributed substantially to our theoretical understanding of evolutionary algorithms for binary representations. Poli et al.~\cite{polietal2010} state, ``we expect to see computational complexity techniques being used to model simpler GP systems, perhaps GP systems based on mutation and stochastic hill-climbing.''  This contribution is one fulfillment of this prediction: its goal is to show a GP variant that identifies optimal solutions in provably low numbers of fitness function evaluations for two much simplified, but still relevant, problems that exhibit a few simple aspects of program structure.

The simple parameterized GP algorithm we analyze can succinctly be described as both a hill climber and a randomized algorithm. It has four parametric instantiations we call \oneonegpsingle, \oneonegpmulti, \oneonegpssingle, and \oneonegpsmulti that differ in the acceptance criterion and the size of the mutation proposed. Initially, a solution is chosen at random. We produce by random mutation exactly one offspring of the current solution, and replace the current solution by this proposal as specified by the acceptance criterion. The algorithm iterates until it finds an optimal solution. This simple form of GP algorithm has historical precedent in very early comparisons between Koza-style genetic programming and GP stochastic iterated hill climbing \cite{OReilly:1994:GPSAHC,OReilly:thesis,OReilly:1996:aigp2}, though it does not include a finite bound on fitness evaluations, random restarts or a limit on how many times mutation will be applied to the current solution. Another simplification of the algorithm is that it uses a genetic operator that is as similar to bit-wise mutation as possible. A single bit-wise mutation is the smallest step possible in an binary EA's search space. Our mutation operator makes the smallest alteration possible to the GP tree while respecting the key properties of the GP tree search space: variable length and hierarchical structure.


The two model problems we select for our analysis are \ORDER and \MAJORITY, defined exactly as in \cite{goldberg:1998:good}.  We have chosen \ORDER and \MAJORITY because they make complexity analysis tractable. They allow fitness function evaluation without explicitly executing the program defined by the GP tree. They are minimally sufficient to capture several key properties of GP, including the existence of multiple optimal solutions but they are not real world application problems. Neither are they ad-hoc toy problems intended to demonstrate GP's strength (such as Boolean multiplexer for classical GP \cite{koza:1992:book} or lawnmower for GP with automatically defined functions \cite{koza:gp2}). Each problem has a simple relation to more realistic GP problems: \ORDER requires correct ordering as in conditional programs and \MAJORITY requires the correct set of solution components.

We proceed as follows: in Section~\ref{sect:definitions}, we formally describe the GP variants and the two problems.  This requires that we first describe program initialization from a primitive set (\ref{sect:ProgramInit}) and our mutation operator which is called \hvlMutateTwoPointOh (\ref{sect:HVLMutateDefinition}).  We then proceed in Sections~\ref{sec:order} and \ref{sec:majority} with our analyses of \ORDER and \MAJORITY in terms of the expected number of fitness evaluations until our algorithms have produced a globally optimal solution for the first time. This is called the expected optimization time of the algorithm. Our results are followed by a discussion in Section~\ref{sec:discussion} and conclusions and future work in Section~\ref{sec:conclusion}.
 
\section{Definitions}\label{sect:definitions}

\subsection{Program Initialization}\label{sect:ProgramInit}

To use tree-based genetic programming, one must first choose a set of primitives $A$, which contains a set $F$ of functions and a set $L$ of terminals. Each primitive has explicitly defined semantics; for example, a primitive might represent a Boolean condition, a branching statement such as an IF-THEN-ELSE conditional, the value bound to an input variable, or an arithmetic operation. Functions are parameterized. Terminals are either functions with no parameters, i.e.~arity equal to zero, or input variables to the program.

In our derivations, we assume that a GP program is initialized by its parse tree construction. In general, we start with a root node randomly drawn from $A$ and recursively populate the parameters of each function in the tree with subsequent random samples from $A$, until the leaves of the tree are all terminals. Functions constitute the internal nodes of the parse tree, and terminals occupy the leaf nodes. The exact properties of the tree generated by this procedure will not figure into the analysis of the algorithm, so we do not discuss them in depth.



\subsection{\hvlMutateTwoPointOh}\label{sect:HVLMutateDefinition}

The \hvlMutateTwoPointOh operator is an update of O'Reilly's HVL mutation operator (\cite{OReilly:thesis,OReilly:1994:GPSAHC}) and motivated by minimality rather than inspired from a tree-edit distance metric. HVL first selects a node at random in a copy of the current parse tree. Let us term this the \currentNode.  It then, with equiprobability, applies one of three sub-operations: insertion, substitution, or deletion. Insertion takes place above \currentNode: a randomly drawn function from $F$ becomes the parent of \currentNode and its additional parameters are set by drawing randomly from $L$. Substitution changes \currentNode to a randomly drawn function of $F$ with the same arity. Deletion replaces \currentNode with its largest child subtree, which often admits large deletion sub-operations.

The operator we consider here, \hvlMutateTwoPointOh, functions slightly differently, since we restrict it to operate on trees where all functions take two parameters. Rather than choosing a node followed by an operation, we first choose one of the three sub-operations to perform. The operations then proceed as shown in Figure~\ref{fig:hvl_prime_example}. Insertion and substitution are exactly as in HVL; however, deletion only deletes a leaf and its parent to avoid the potentially macroscopic deletion change of HVL that is not in the spirit of bit-flip mutation. This change makes the algorithm more amenable to complexity analysis and specifies an operator that is only as general as our simplified problems require, contrasting with the generality of HVL, where all sub-operations handle primitives of any arity. Nevertheless, both operators respect the nature of GP's search among variable-length candidate solutions because each generates another candidate of potentially different size, structure, and composition.

In our analysis on these particular problems, we make one further simplification of \hvlMutateTwoPointOh: substitution only takes place at the leaves. This is because our two problems only have one generic ``join'' function specified, so performing a substitution anywhere above the leaves is a vacuous mutation. Such operations only constitute one-sixth of all operations, so this change has no impact on any of the runtime bounds we derive.

\begin{figure}[htp]
\centering
\subfigure[Before insertion]
  {\includegraphics[width=1.3in,height=0.95in,trim=10mm 15mm 0mm 0mm]{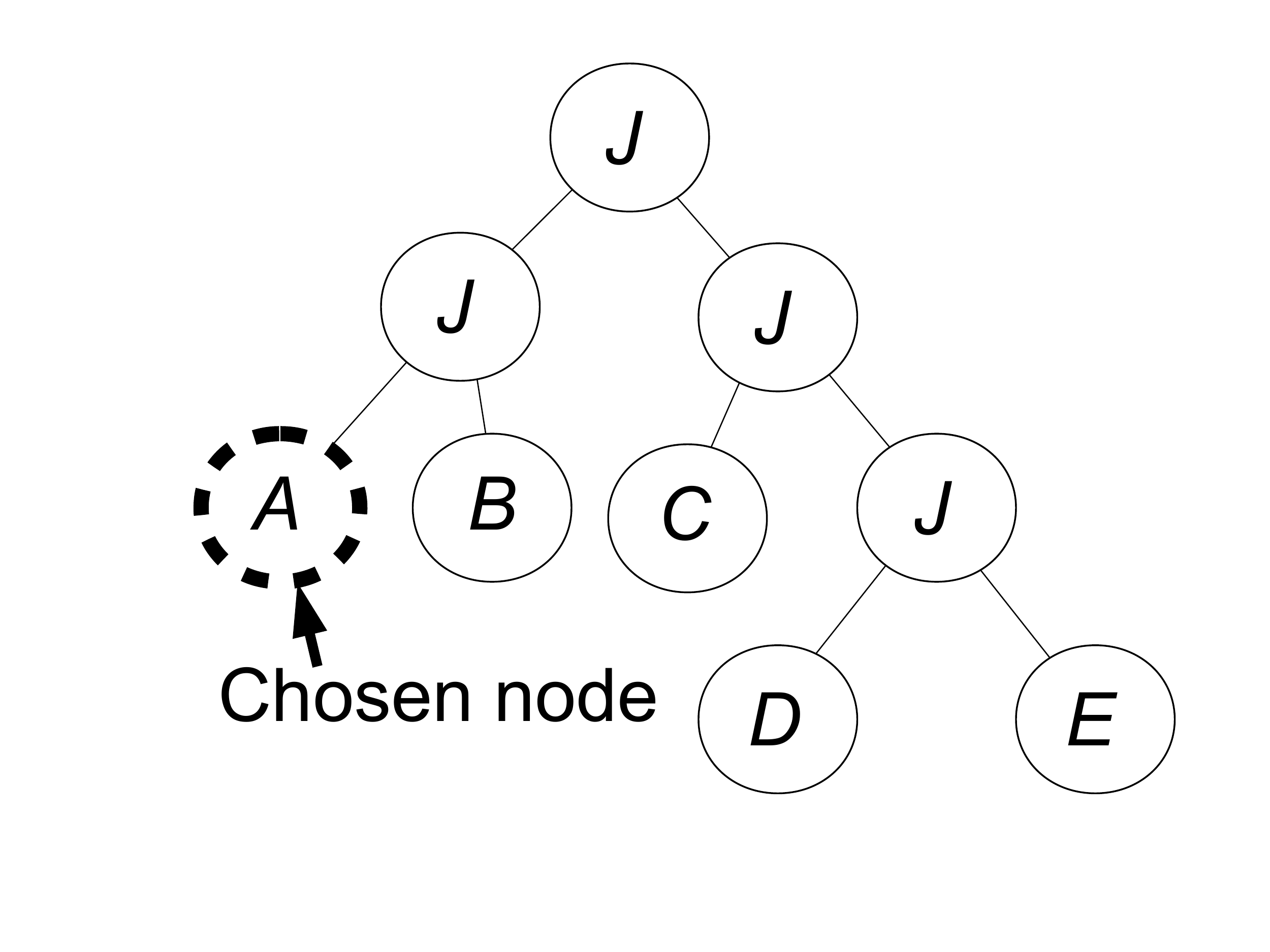}}
\hspace{0.2in}
\subfigure[After insertion]
  {\includegraphics[width=1.4in,height=0.95in,trim=10mm 15mm 0mm 0mm]{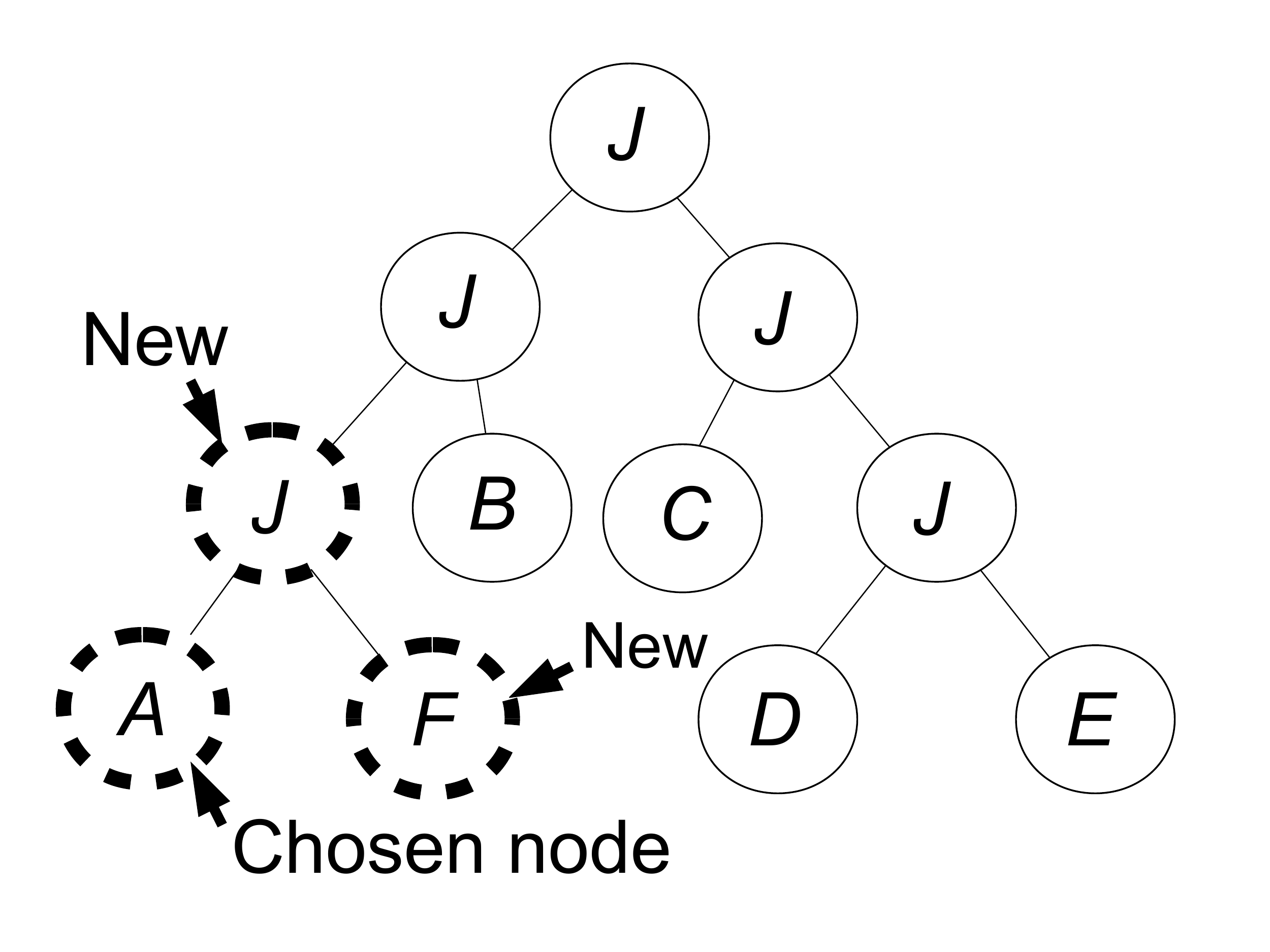}}
\subfigure[Before deletion]
  {\includegraphics[width=1.3in,height=0.95in,trim=10mm 15mm 0mm 0mm]{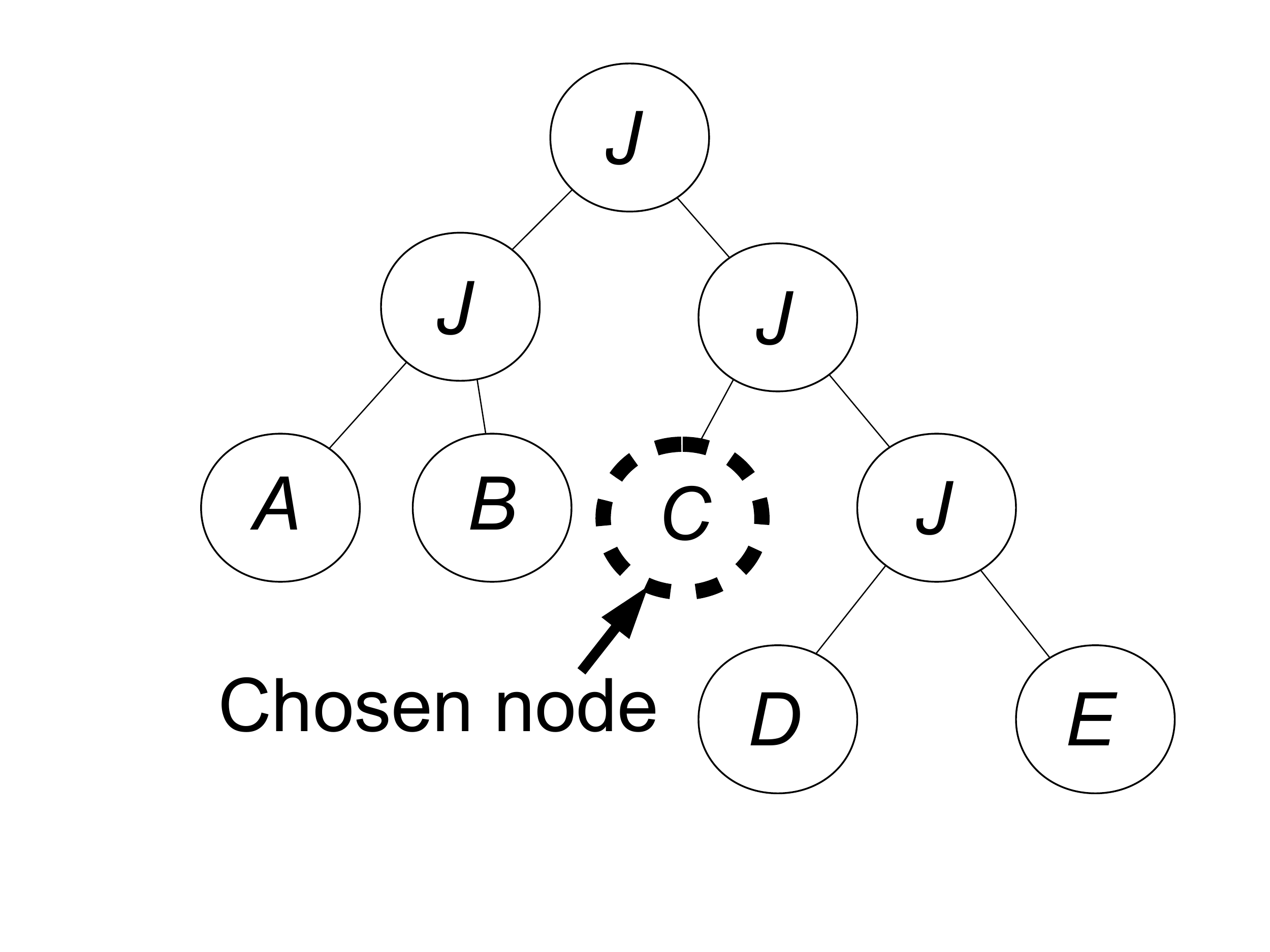}}
\hspace{0.2in}
\subfigure[After deletion]
  {\includegraphics[width=1.4in,height=0.95in,trim=10mm 30mm 0mm 0mm]{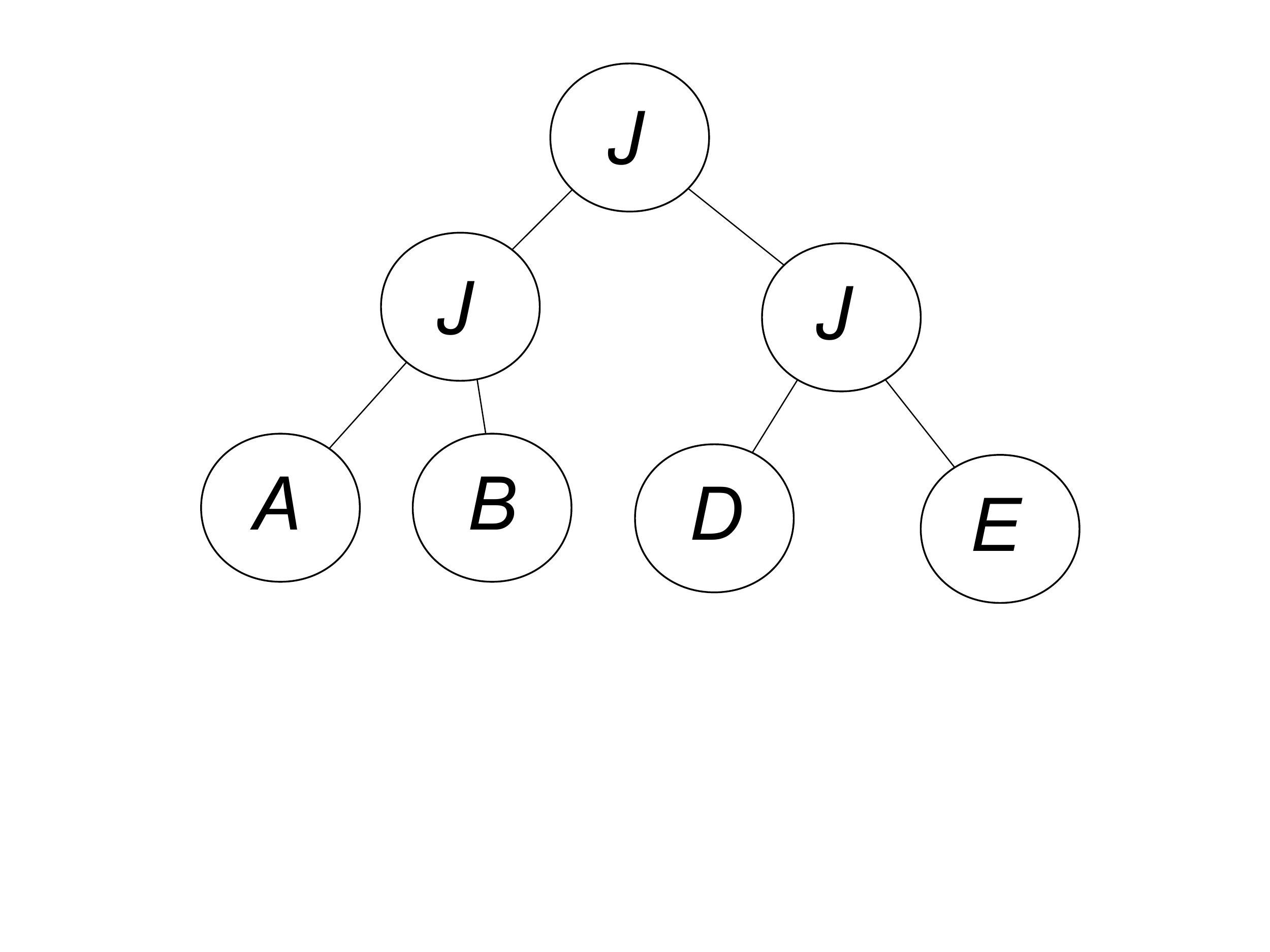}}
\subfigure[Before substitution]
  {\includegraphics[width=1.3in,height=0.95in,trim=10mm 15mm 0mm 0mm]{current-hvl-mutate-prime-2.pdf}}
\hspace{0.2in}
\subfigure[After substitution]
  {\includegraphics[width=1.3in,height=0.95in,trim=10mm 15mm 0mm 0mm]{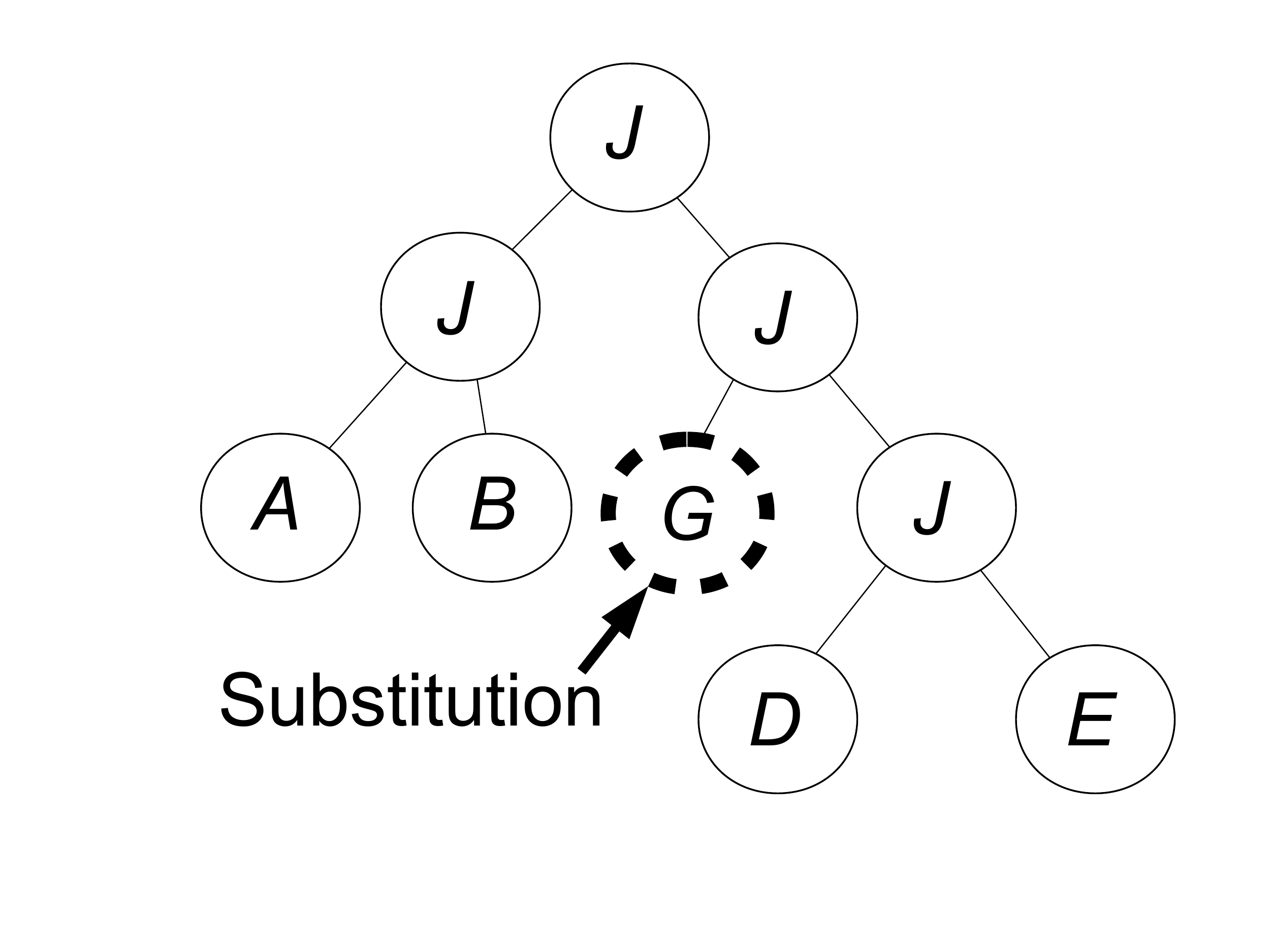}}
\caption{Example of the operators from \hvlMutateTwoPointOh.}
\label{fig:hvl_prime_example}
\end{figure}


\subsection{Algorithms}\label{sect:algGP}
\ignore{
\oneonegp and \oneonegps  are each instantiations of the algorithm described in Algorithm~\ref{alg} which is parameterized by an accept function.   The accept function of \oneonegp is  \textit{greater than or equal}, i.e. $\geq$. The accept function of \oneonegps is \textit{greater than}, i.e. $>$. 
}

We define two genetic programming variants called \oneonegp and \oneonegps. Both algorithms work with a population of size one and produce in each iteration one single offspring. \oneonegp is defined in Algorithm~\ref{gp} and accepts an offspring if it is as least as fit as its parent.

\begin{algorithm}[\oneonegp]

\begin{enumerate}
\item[]
\item Choose an initial solution $X$.
\item Set $X':= X$.
\item Mutate $X'$ by applying \hvlMutateTwoPointOh $k$ times. For each application, randomly choose to either substitute, insert, or delete.
\begin{itemize}
\item If substitute, replace a randomly chosen leaf of $X'$ with a new leaf $u \in L$ selected uniformly at random.
\item If insert, randomly choose a node $v$ in $X'$ and select $u \in L$ uniformly at random. Replace $v$ with a join node whose children are $u$ and $v$, with the order of the children chosen randomly.
\item If delete, randomly choose a leaf node $v$ of $X'$, with parent $p$ and sibling $u$. Replace $p$ with $u$ and delete $p$ and $v$.
\end{itemize}
\item If $f(X') \geq f(X)$, set $X:=X'$.
\item Go to 2.
\end{enumerate}
\label{gp}
\end{algorithm}

\oneonegps differs from \oneonegp by accepting only solution that are strict improvements (see Algorithm~\ref{gps}).

\begin{algorithm}[Acceptance for \oneonegps]
\begin{enumerate}
\item[]
\item[4'.] If $f(X') > f(X)$, set $X:=X'$.
\end{enumerate}
\label{gps}
\end{algorithm}

For each of \oneonegp and \oneonegps we consider two further variants which differ in using one  application of  \hvlMutateTwoPointOh (``single'') or in using more than one (``multi''). For \oneonegpsingle and \oneonegpssingle, we set $k=1$, so that we perform one mutation at a time according to the \hvlMutateTwoPointOh framework. For \oneonegpmulti and \oneonegpsmulti, we choose $k = 1+\text{Pois}(1)$, so that the number of mutations at a time varies randomly according to the Poisson distribution.

We will analyze these four algorithms in terms of the expected number of fitness evaluations to produce an optimal solution for the first time. This is called the expected optimization time of the algorithm.

\subsection{The \ORDER{} problem}\label{sect:ORDERdefn}

We consider two separable problems called \ORDER and \MAJORITY that have an independent, additive fitness structure. They both admit multiple solutions on their objective function, which we feel is a key property of a model GP problem because it holds generally for all real GP problems. They also both use the same primitive set:

\begin{itemize}
\item $F:=\{J\}$,  $J$ has arity 2.
\item $L:= \{x_1, \bar{x}_1, \ldots, x_n, \bar{x}_n\}$
\end{itemize}
$x_i$ is the complement of $\bar{x}_i$.  

\ORDER represents problems where the primitive sets include conditional functions, which gives rise to conditional execution paths.  GP classification problems, for example, often employ a numerical comparison function (e.g.~greater than X, less than X, or equal to X). This sort of function has two arguments (subtrees), one which will be executed only when the comparison returns true, the other only when it returns false \cite{koza:1992:book}. Thus, a conditional function results in a conditional execution path, so the GP algorithm must identify and appropriately position the conditional functions to achieve the correct conditional execution behavior for all inputs.

\ORDER is an abstracted simplification of this challenge in that it determines the conditional path execution of a program by tree inspection rather than execution.  Instead of evaluating a condition test and then executing the appropriate condition body explicitly, an \ORDER program's conditional execution path is determined by simply inspecting whether a primitive or its complement occurs first in an in-order leaf parse. Correct programs for the \ORDER problem must express each positive primitive \xsubi before its corresponding complement \xsubibar. This correctness requirement is intended to reflect a property commonly found in the GP solutions to problems where conditional functions are used: there exist multiple solutions, each with different conditional execution paths.

\begin{algorithm}[$f(X)$ for \ORDER]
\begin{enumerate}
\item[]
\item Derive conditional execution path $P$ of X:
\begin{enumerate}
\item[Init: ]$l$ an empty leaf list, $P$ an empty conditional execution path
\item[1.1] Parse X inorder and insert each leaf at the rear of $l$ as it is visited.
\item[1.2] Generate P by parsing $l$ front to rear and adding (``expressing'') a leaf to $P$ only if it or its complement are not yet in P (i.e. have not yet been expressed).  
\end{enumerate}
\item $f(X)= |\{x_{i} \in P\}|$.
\end{enumerate}
\end{algorithm}
For example, for a tree X, with (after the inorder parse)  $l = (x_{1}, \bar{x}_4, x_{2}, \bar{x}_1, x_{3}, \bar{x}_6)$,  $P=(x_{1}, \bar{x}_4, x_{2}, x_{3}, \bar{x}_6)$ and $f(X)=3$ because $x_{1}, x_{2}$, $x_{3} \in P$.


\subsection{The \MAJORITY{} problem}\label{sect:MAJORITYdefn}

\MAJORITY is a GP equivalent of the GA OneMax problem \cite{goldberg:1998:good}. \MAJORITY reflects a general (and thus weak) property required of GP solutions: a solution must have correct functionality and no incorrect functionality. Like \ORDER, \MAJORITY is a simplification that uses tree inspection rather than program execution. A correct program in \MAJORITY must exhibit at least as many occurrences of a primitive as of its complement and it must exhibit all the positive primitives of its terminal (leaf) set.  Both the independent sub-solution fitness structure and inspection property of \MAJORITY are necessary to make our analysis tractable. 


\begin{algorithm}[$f(X)$ for \MAJORITY]

\begin{enumerate}
\item[]
\item Derive the combined execution statements $S$ of X:
\begin{enumerate}
\item[Init: ]$l$ an empty leaf list, $S$ is an empty statement list.
\item[1.1] Parse X inorder and insert each leaf at the rear of $l$ as it is visited.
\item[1.2]  For $i \leq n$:  if  $\text{count}(x_{i} \in l) \geq \text{count}(\bar{x}_i \in l)$ and $\text{count}(x_{i} \in l) \geq 1$, add $x_{i}$ to $S$
\end{enumerate}
\item $f(X)= |S|$.
\end{enumerate}
\end{algorithm}

For example, for a tree X, with (after the inorder parse)  $l = (x_{1}, \bar{x}_4, x_{2}, \bar{x}_1,\bar{x}_3, \bar{x}_6, x_{1}, x_{4})$,  $S=(x_{1}, x_{2}, x_{4})$ and $f(X)=3$.



\section{Analysis for \ORDER}\label{sec:order}

Here we present bounds for \ORDER on the number of runtime evaluations needed in the execution of \oneonegp and \oneonegps.

We will analyze this GP problem using fitness-based partitions~\cite{DJWoneone}. This requires us to compute the probability of improving the fitness from $k$ to $k+1$ for each value of $k$ between 0 and $n-1$, inclusive. Although our \hvlMutateTwoPointOh operator is complex, we can obtain a lower bound on the probability of making an improvement by considering fitness improvements that arise from insertions. This is described in the following lemma.

\begin{lemma}
\label{lem:order_oneonegp}
Define $p_k$ to be the probability that we perform an insertion that improves the fitness value of the GP tree from $k$ to $k+1$. For the single- and multi-operation variants of \oneonegp and \oneonegps applied to the \ORDER problem, 
$$ p_k = \Omega \left( \frac{(n-k)^2}{n \max\{T, n\}} \right)
$$
where $n$ is the number of variables and $T$ is the number of leaves in the GP tree at the particular iteration.
\end{lemma}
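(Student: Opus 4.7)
The plan is to lower-bound $p_k$ by isolating single-insertion events that are guaranteed to raise the fitness by one. Let $U \subseteq \{1,\dots,n\}$ be the set of indices of unexpressed positive variables, so $|U| = n-k$, and partition $U$ into $U_A$ (indices $i$ for which neither $x_i$ nor $\bar{x}_i$ appears in the tree) and $U_B$ (indices $i$ for which $\bar{x}_i$ appears, so that the leftmost occurrence of $\{x_i,\bar{x}_i\}$ in the in-order leaf list is a $\bar{x}_i$). Write $a=|U_A|$ and $b=|U_B|$, with $a+b=n-k$.

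For each $i\in U_A$, inserting $u=x_i$ as a sibling of any of the $2T-1$ nodes, in either orientation, places a new $x_i$ into the in-order list; in the absence of any $\bar{x}_i$ this expresses $x_i$, so there are $2(2T-1)$ favorable (node, orientation) pairs per such $i$. For each $i\in U_B$ I let $j_i^*$ be the in-order position of the leftmost $\bar{x}_i$, and verify by a short case analysis on the orientation that inserting $u=x_i$ as the \emph{left} sibling of any leaf at position $p\le j_i^*$, or as the \emph{right} sibling of any leaf at position $p\le j_i^*-1$, places $u$ to the left of every $\bar{x}_i$ in the new in-order traversal and thus expresses $x_i$. This contributes at least $2j_i^*-1$ favorable (leaf, orientation) pairs per $i\in U_B$.

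The combinatorial crux is that the positions $\{j_i^* : i\in U_B\}$ are $b$ distinct elements of $\{1,\dots,T\}$ (the corresponding $\bar{x}_i$-leaves are distinct), hence $\sum_{i\in U_B} j_i^* \ge 1+2+\dots+b = b(b+1)/2$ and $\sum_{i\in U_B}(2j_i^*-1)\ge b^2$. The total count of favorable $(u,v,\text{orientation})$ triples is therefore at least $2a(2T-1)+b^2$; dividing by the $2n\cdot(2T-1)\cdot 2$ total triples and multiplying by the $1/3$ probability that a \hvlMutateTwoPointOh application is an insertion yields
\[
p_k \;\ge\; \frac{a}{6n}\;+\;\frac{b^{2}}{12\,n\,(2T-1)}.
\]
Since $a+b=n-k$, either $a\ge (n-k)/2$, in which case the first term gives $p_k=\Omega((n-k)/n)$, or $b\ge (n-k)/2$, in which case the second term gives $p_k=\Omega\bigl((n-k)^{2}/(nT)\bigr)$. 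Using $\max\{T,n\}\ge n\ge n-k$ in the first case and $\max\{T,n\}\ge T$ in the second, both cases yield the claimed $\Omega\bigl((n-k)^{2}/(n\max\{T,n\})\bigr)$.

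For the multi variants I would condition on $\text{Pois}(1)=0$, which happens with probability $e^{-1}$; then only one \hvlMutateTwoPointOh is applied and the single-variant estimate transfers up to a constant factor. The acceptance rule is immaterial here because a fitness-increasing move satisfies both $\ge$ and $>$, so the same bound holds for \oneonegp and \oneonegps. The main obstacle I anticipate is precisely the quadratic-in-$b$ step: the naive ``at least one good slot per $\bar{x}_i$'' count contributes only linearly in $b$ and loses the $(n-k)^2$ numerator, so it is essential to exploit the distinctness of the $j_i^*$ and bound their sum below by $\binom{b+1}{2}$.
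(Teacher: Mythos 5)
Your proof is correct and takes essentially the same approach as the paper: lower-bound $p_k$ via single insertions only, absorb the multi-operation case by conditioning on $\mathrm{Pois}(1)=0$, and count favorable (variable, insertion-slot) pairs to get the quadratic sum $\Theta((n-k)^2)$ in the numerator. The only difference is organizational: your $U_A$/$U_B$ partition together with the distinctness of the blocker positions $j_i^*$ replaces the paper's reindexing argument and its explicit case split on $T \geq n-k$ versus $T < n-k$, but the underlying double count is the same.
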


\begin{proof}
When the fitness value is $k$, it must be the case that $k$ different \xsubi appear before their corresponding \xsubibar. To improve the fitness, we must insert one of the $n-k$ unexpressed \xsubi as a leaf that will be visited before a leaf containing the corresponding \xsubibar. Assume for notational ease that these unexpressed \xsubi are indexed by $\{x_1,...,x_{n-k}\}$. Define $A_i$ to be the event that we insert \xsubi into the tree with our mutation operation, and define $B_i$ to be the event that \xsubi is inserted before the corresponding \xsubibar. Given this, we can write out $p_k$ as follows.
$$ p_k = \sum_{i=1}^{n-k} \text{Pr}(A_i) \text{Pr}(B_i | A_i)
$$
With a single operation, the probability of choosing to insert a particular \xsubi is $\frac{1}{6n}$, since we choose to insert with probability $\frac{1}{3}$ and select the variable uniformly at random from the set of $2n$ possible terminals. We can cover the multi-operation case with this analysis as well because the number of operations is sampled according to 1 + Pois(1), so the probability of performing exactly one operation is $\frac{1}{e}$. The probability of $A_i$ is therefore at least $\frac{1}{6en}$, so in both the single- and multi-operation cases, we have
$$ p_k \geq \frac{1}{6e n} \sum_{i=1}^{n-k} \text{Pr}(B_i | A_i)
$$
We need to analyze two cases in computing this sum. Preliminarily, we define $S$ to be the total number of nodes in the GP tree. Note that $S = 2T - 1$, so $S = \Theta(T)$.

Case 1: $T \geq n-k$. We first note that the probability of inserting \xsubi such that it is visited between the $j-1$st leaf and the $j$th leaf in the traversal is at least $\frac{1}{2S}$, since we choose to insert at the $j$th leaf with probability $\frac{1}{S}$ and then add \xsubi as a left child of the new join node with probability $\frac{1}{2}$.

Inserting any of the $n-k$ unexpressed \xsubi before the first leaf in the tree clearly improves the fitness. If we insert at the second position instead, there must still be at least $n-k-1$ choices of \xsubi that yield an improvement: there is only one node that will be traversed before this position in the tree, so there is at most one \xsubibar expressed before this position. We can iterate this argument to see that at the $i$th position, there are still $n-k-i+1$ \xsubi that can be inserted for an improvement to the fitness. By reindexing the \xsubi, we then have that \xsubi can be inserted in at least the first $i$ positions in the tree. Using the fact that the number of leaves $T$ is at least $n-k$, we have that
\begin{align*}
p_k &\geq \frac{1}{6en} \sum_{i=1}^{n-k} \text{Pr}(B_i|A_i)\\
&\geq \frac{1}{6en}\sum_{i=1}^{n-k} \frac{n-k-i+1}{2S} = \frac{1}{6en}\sum_{i=1}^{n-k} \frac{i}{2S}\\
&= \frac{1}{6en}\frac{(n-k)(n-k+1)}{4S} \geq \frac{1}{6en}\frac{(n-k)^2}{\max\{4S,n\}}
\end{align*}
Noting that $S = \Theta(T)$, the asymptotic result follows.

Case 2: $T < n-k$: We can apply the argument of Case 1 up to the $T$th position. After this, we have that for $n-k-T+1$ of the unexpressed \xsubi, the corresponding \xsubibar appears nowhere in the tree, so the probability of an insertion improving the fitness is 1. We also note that $S < 2n$ in this case, allowing us to simplify our expression for $p_k$ as follows.
\begin{align*}
p_k &\geq \frac{1}{6en} \sum_{i=1}^{n-k} \text{Pr}(B_i | A_i)\\
&\geq \frac{1}{6en} \left[ \sum_{i=1}^{T} \frac{i}{2S} \right] + \frac{1}{6en} \sum_{i=T+1}^{n-k} 1\\
&\geq \frac{1}{6en} \left[ \sum_{i=1}^{T} \frac{i}{4n} \right] + \frac{n-k-T}{6en}\\
&= \frac{T(T+1)}{24en^2} + \frac{(n-k-T)(n-k)}{6en(n-k)}
\end{align*}
If $T = \Omega(n-k)$, then we lower-bound $p_k$ using only the first term, which behaves asymptotically in this case as $\Omega\left(\frac{(n-k)^2}{n^2}\right)$. Otherwise, if $T = o(n-k)$, then we use the second term, which then grows according to $\Omega\left(\frac{(n-k)^2}{n^2}\right)$. In either case, because $T$ is less than $n$, we have the desired asymptotic behavior.
\end{proof}

%

With this lemma, we can now state the general theorem about the number of fitness evaluations needed for
our \oneonegp variants.

\begin{theorem}
\label{thm:oneonegp}
The expected optimization time of the single- and multi-operation cases of \oneonegp and \oneonegps on \ORDER is $O(nT_{\max})$ in the worst case, where $n$ is the number of \xsubi and $T_{\max}$ denotes the maximal tree size at any stage during the evolution of the algorithm.
\end{theorem}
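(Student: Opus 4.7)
The plan is a textbook application of the fitness-based (level) partition method combined with Lemma~\ref{lem:order_oneonegp}. Partition the search space into $n+1$ levels according to the current fitness value $k \in \{0,1,\dots,n\}$. Under both acceptance rules, once a fitness value $k$ has been reached, the algorithm never moves to a level with a strictly smaller $k$, so the expected optimization time is at most $\sum_{k=0}^{n-1} 1/p_k$, where $p_k$ is any lower bound on the per-iteration probability of moving from level $k$ to a higher level. Lemma~\ref{lem:order_oneonegp} supplies such a lower bound for all four algorithmic variants simultaneously, since the insertion-only analysis it is based on applies regardless of the acceptance criterion and regardless of whether a single mutation or $1+\mathrm{Pois}(1)$ mutations are performed.

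Next, I would replace the instantaneous tree size $T$ in Lemma~\ref{lem:order_oneonegp} by its worst-case upper bound $T_{\max}$, which is allowed because the lemma's bound is monotonically decreasing in $T$: enlarging $T$ only enlarges $\max\{T,n\}$, hence only weakens the bound. This gives the uniform estimate
\begin{equation*}
\frac{1}{p_k} \;=\; O\!\left( \frac{n \max\{T_{\max},n\}}{(n-k)^2} \right).
\end{equation*}
Summing over $k$ and substituting $j = n-k$ yields
\begin{equation*}
\sum_{k=0}^{n-1} \frac{1}{p_k} \;=\; O\!\left( n\max\{T_{\max},n\} \sum_{j=1}^{n} \frac{1}{j^2} \right) \;=\; O\bigl( n \max\{T_{\max},n\} \bigr),
\end{equation*}
using the convergence of $\sum_{j\ge 1} 1/j^2$. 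Since by definition $T_{\max}$ is the maximum tree size encountered (and a tree must exist at every step, so $T_{\max}\ge 1$), the statement $O(nT_{\max})$ follows, absorbing the $\max$ into the asymptotic notation (either $T_{\max}\ge n$, in which case $\max\{T_{\max},n\}=T_{\max}$, or $T_{\max}<n$, in which case the bound is $O(n^2)=O(n\cdot n)\le O(nT_{\max}')$ for any later realization where the tree grew; the bound is simply $O(n\max\{T_{\max},n\})$ and stating it as $O(nT_{\max})$ is justified when $T_{\max}\ge n$, which is the intended regime).

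There is no serious obstacle; the only delicate point is the justification for freezing $T$ at $T_{\max}$ when $p_k$ actually depends on the dynamically evolving tree size. The clean way to handle this is to observe that $p_k$ is bounded from below pointwise by the expression obtained by substituting $T_{\max}$ for $T$, so the expected waiting time at level $k$ is dominated by a geometric random variable with success probability $\Omega((n-k)^2/(n T_{\max}))$ in each iteration spent at that level. Linearity of expectation over the $n$ levels then produces the claimed $O(nT_{\max})$ bound uniformly for \oneonegpsingle, \oneonegpmulti, \oneonegpssingle, and \oneonegpsmulti.
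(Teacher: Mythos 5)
Your proposal is correct and follows essentially the same route as the paper's proof: a fitness-based partition into levels $k=0,\dots,n$, the insertion-only lower bound on $p_k$ from Lemma~\ref{lem:order_oneonegp} (valid for all four variants and both acceptance rules), and the bound $\sum_{k=0}^{n-1} 1/p_k = O\left(n\max\{T_{\max},n\}\sum_{j\ge 1} 1/j^2\right) = O(nT_{\max})$. The only cosmetic difference is at the very end: the paper disposes of the $\max\{T_{\max},n\}$ term by noting that $T_{\max}\ge T_n\ge n$ (an optimal tree must contain at least $n$ leaves), which is a cleaner justification than your closing case analysis, but the substance is identical.
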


\begin{proof}
We can apply Lemma~\ref{lem:order_oneonegp} to these algorithms, which implies an asymptotic lower bound on $p_k$, the probability of improving the fitness from $k$ to $k+1$ via an insertion. This also serves as an asymptotic lower bound on the probability of improving the fitness at all, and therefore provides an expected time necessary to improve the fitness, regardless of whether or not we accept neutral moves. In order to determine the total number of evaluations, we must sum the expected number of fitness function evaluations over all intermediate fitness values, from $k = 0$ to $k = n-1$. 

The expected optimization time is therefore upper bounded by 
\begin{align*}
 \sum_{k=0}^{n-1} \frac{1}{p_k}
&  = \sum_{k=0}^{n-1} O\left(\frac{n\max\{T_k,n\}}{(n-k)^2}\right)\\
& = nT_{\max} \sum_{k=0}^{n-1} O\left(\frac{1}{(n-k)^2} \right)\\
& = O\left(nT_{\max} \sum_{j=1}^\infty \frac{1}{j^2} \right) \\
& = O(nT_{\max}) 
\end{align*}
where the second equality follows from the fact that $T_{\max} \geq T_n \geq n$, and the last equality
follows from the fact that $\sum_{j=1}^\infty \frac{1}{j^2}\leq 2$.
\end{proof}

Note that most GP algorithms explicitly limit the maximum tree size that can be used in an algorithm. Choosing a linear maximum tree size that would still allow us to generate an optimal solution, \ie a tree with at least $n$ leaves, gives an algorithm that solves the \ORDER problem in expected time $O(n^2)$. However, it is also sometimes possible to show that the tree does not get too big during the optimization process. We examine this for \oneonegpssingle and present an upper bound on the expected optimization time.
\begin{corollary}
\label{cor:star-order}
The expected optimization time of \\
\oneonegpssingle on \ORDER is $O(n^2)$ if the tree is initialized with $O(n)$ terminals.
\end{corollary}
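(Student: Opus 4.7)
The plan is to combine Theorem~\ref{thm:oneonegp} with a size bound: since Theorem~\ref{thm:oneonegp} already gives an expected optimization time of $O(nT_{\max})$, it suffices to argue that under \oneonegpssingle with an $O(n)$-terminal initialization, $T_{\max}$ remains $O(n)$ throughout the run. The $O(n^2)$ bound then follows immediately.

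The core size argument rests on two features of \oneonegpssingle. First, the ``single'' variant sets $k=1$, so each iteration applies \hvlMutateTwoPointOh exactly once; insertion increases the tree size by one leaf, deletion decreases it by one leaf, and substitution leaves it unchanged. Hence every accepted step changes the number of leaves by at most one. Second, the strict acceptance rule (step~4') only accepts strict fitness improvements. Because the \ORDER fitness is an integer bounded above by $n$ and bounded below by $0$, at most $n$ strictly improving steps can ever occur over the entire run, regardless of how long the algorithm executes.

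Putting these two facts together, the total number of accepted mutations is at most $n$, and each changes the tree size by at most one. Therefore, if $T_0$ denotes the size of the initial tree, then for every iteration $t$ we have $T_t \leq T_0 + n$. Since the initialization yields $T_0 = O(n)$, we conclude $T_{\max} = O(n)$.

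The remaining step is routine: substitute $T_{\max} = O(n)$ into the bound from Theorem~\ref{thm:oneonegp} to obtain an expected optimization time of $O(n \cdot T_{\max}) = O(n^2)$. The only subtlety to be careful about is that the size argument relies on acceptance being strict — under \oneonegp with the $\geq$ rule, neutral moves are accepted and one could in principle have an unbounded number of accepted insertions, so the argument genuinely uses the ``*'' variant. No difficult estimation arises; the main conceptual point is simply to observe that strict acceptance caps the total number of tree-altering events at $n$.
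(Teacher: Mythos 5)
Your proof is correct and follows essentially the same route as the paper: strict acceptance plus the integer fitness bound of $n$ caps the number of accepted mutations at $n$, each accepted single mutation grows the tree by only a constant, so $T_{\max}=O(n)$, and the bound follows from Theorem~\ref{thm:oneonegp}. The only cosmetic difference is that you count leaves while the paper counts nodes (each insertion adds at most two nodes), which is equivalent since the node count is $2T-1$.
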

\begin{proof}
We note that the maximum value of the fitness is $n$, and the fitness is integer-valued, so if it is strictly increasing with each operation that is accepted, there must be no more than $n$ operations accepted. In the single-operation framework, each operation adds at most two nodes to the tree (if it is an insertion), which means that $\Tmax \leq O(n)+2n = O(n)$ holds during the run of the algorithm.
\end{proof}
The case of \oneonegpsmulti is more difficult to analyze because the expected length of accepted moves may be very different from the expected length of proposed moves, as conditioning on accepting the move will skew the distribution. We conjecture that the bound from Corollary~\ref{cor:star-order} holds in this case as well, but do not present a proof of this.

We also note that because of how our fitness-based partition argument is structured, invoking the average case does not enable us to find any better bounds. Although $k$ will initially be somewhat greater than zero, we will generally still need to improve the fitness $\Theta(n)$ times, so we will have the same asymptotic result.

\ignore{
[UM to Greg: We have removed the proof for \oneonegpsmulti because we haven't got a correct one. We want to bound the tree size in O(n) but this is tricky. What should we state in the paper? We can state this fact by omission or we can  say we don't have one and state the difficulty arising in trying to develop one (but we don't want to be held to producing one!)]
}
\ignore{
We now present an upper bound on the number of fitness evaluations required by \oneonegpsmulti   We will argue by induction that $T$ is $O(n)$ at every step of the algorithm. By definition, this holds at initialization. We now show that, given that this holds at step $i$, it holds at step $i+1$.

Let $m_i$ be the number of operations (i.e. insertions, deletions, or substitutions) performed during the $i$th accepted move. The expectation of this variable is simply the expected number of operations performed conditioned on the fact that the move is accepted, which we write as
\begin{align*}
E[m_i] &= \sum_{j=1}^\infty j \times \text{Pr(num ops = $j$ | move accepted)}\\
&= \sum_{j=1}^\infty j \times \frac{\text{Pr(num ops = $j$ and move accepted)}}{\text{Pr(move accepted)}}\\
 &\leq \sum_{j=1}^\infty j \times \frac{\text{Pr(num ops = $j$)}}{\text{Pr(move accepted)}}

\end{align*}
by simple properties of conditional probability. The probability that we propose a mutation of size $j$ is $\frac{1}{e((j-1)!)}$ due to the Poisson sampling of the number of moves to make. Lemma~\ref{lem:order_oneonegp} ensures us of an acceptance probability of $\Omega\left(\frac{1}{nT}\right)$ (lower-bounding $(n-k)^2$ by 1 for simplicity). By our inductive hypothesis, $T = O(n)$. This gives us an overall expression of



$$ E[m_i] = \sum_{j=1}^\infty O\left(\frac{j n^2}{(j-1)!}\right)
$$

The factorial term dominates the expression, giving us $E[m_i] = O(1)$. [TODO: I'm sure this is true, but can't find an elegant way to show it. Will people believe this?] We cannot do more than $m_i$ inserts at the $i$th step, so the tree size only increases by a constant. This proves the inductive argument for a single step. Because we are only adding $O(1)$ nodes in expectation at each step, we can iterate this argument over the entire course of the algorithm (during which time no more than $n$ moves are accepted) to see that we increase the tree size by at most $O(n)$ in expectation. Using linearity of expectation, this gives us that the expected optimization time is upper bounded $O(nE[\Tmax]) = O(n^2)$, as desired. This completes the proof for \oneonegpsmulti.
} 
\ignore{
Now, we consider \oneonegpsmulti by relating it to \oneonegpssingle. Consider the probability of expressing an additional variable by \oneonegpssingle.
The expected number of additional operations of \oneonegpsmulti is chosen according to the Poisson distribution with $\lambda=1$. Hence, the expected number of additional steps that are carried out in an accepted operation is $1$ which implies that $E[\Tmax]$ is upper bounded by $2n+2n+n=5n$. Using linearity of expectation the expected optimization time is upper bounded $O(nE[\Tmax]) = O(n^2)$.

\end{proof}

We cannot easily show a similar bound for \oneonegp. Suppose that $k$ symbols have been expressed, in a configuration like
$$ x_1, x_2, ..., x_k, \bar{x}_1, \bar{x}_2, ..., \bar{x}_k, ..., \text{other variables}
$$
A delete touching any of the leading $k$ variables will decrease the fitness, so our probability of accepting a delete is $1 - \frac{k}{T}$, where $T$ is the number of leaves in the tree. By analogy with the computation in Lemma~\ref{lem:order_oneonegp}, the probability of accepting an insertion is $1 - \frac{k^2}{nT}$. In this case, the tree is biased towards insertions, particularly if $k$ is small. As the tree gets larger, the probability of increasing the fitness decreases, so one could imagine that in the worst case, the tree size will spiral out of control and the optimization time will be very large. Empirically, this does not seem to be the case [TODO: probably no more room for graphs...], but it demonstrates that bounding $T_{\max}$ via a straightforward argument is difficult.
} 
\ignore{

We can make this result more concrete for \oneonegpsingle and \oneonegpmulti by bounding the
size of the GP tree. Here we assume that \Tmax $= poly(n)$ holds. Under this assumption, we can show that the expected maximum tree size is $O(n \log n)$.

\begin{lemma}
\label{lem:drift}
Let $s$ be the current tree size and $k$ be the number of expressed variables. Then the expected tree size in the next iteration of \oneonegpsingle is at most $s+\Delta$, where $\Delta(s) \leq \frac{k}{s}$.
\end{lemma}

\begin{proof}
Let $\Delta(s) = \Delta_+(s) - \Delta_-(s)$, $\Delta_+(s)$ and  $\Delta_-(s)$ denote the expected increase and decrease of a tree of size $s$.

We bound
\[
\Delta_+(s) \leq \frac{1}{3} (1-\frac{k(k-1)}{2ns})
\]
as the probability of carrying out an insert operation is $\frac{1}{3}$. Furthermore
$\frac{k(k-1)}{2}$ operations which insert the complement variables of the already expressed ones are not accepted.

We bound
\[
\Delta_-(s) \geq \frac{1}{3}(1-\frac{k}{s})
\]
as the expected number of deletes carried out in $1$ steps is $\frac{1}{3}$.
and there are at most $\frac{k}{s}$ delete operations (expressing $k$ variables) that would not be accepted.

Hence,
\[
\Delta(s) \leq \frac{1}{3}(1-\frac{k(k-1)}{2ns} - \frac{1}{3}(1-\frac{k}{s})) = \frac{1}{3}(\frac{2nk- k(k-1)}{2ns})
\]
\end{proof}

Using the previous upper bound on the expected increase in a single step, we can bound the expected maximum tree size by $O(n \log n)$ as follows.

\begin{lemma}
\label{lem:treesize}
For a phase of $N = poly(n)$ steps of \oneonegpsingle on \ORDER $E[T_{max}] \leq cn \ln n + 2n +1$ holds, where $c$ is an appropriate constant.
\end{lemma}
\begin{proof}
Due to Lemma~\ref{lem:drift}, the expected increase of a tree of size $s$ is at most $\frac{n}{s}$. Hence, the expected maximum tree within a phase of $N=poly(n)$ steps is 

\[
E[T_{max}] \leq 2n + \sum_{i=2n+1}^{N+2n} \frac{n}{3i} \leq 2n + \frac{n}{3} \ln N +1 \leq 2n + cn \ln n +1, 
\]
where $c$ is an appropriate constant.
\end{proof}

Assuming an initial solution of linear size in $n$, we can bound the expected optimization time of \oneonegpsingle on \ORDER as follows.

\begin{theorem}
\label{cor:order_size}
If the initial tree is of size $O(n)$ and \Tmax=poly(n), then the expected optimization times of \oneonegpsingle on \ORDER is $O(n^2 \log n)$.
\end{theorem}

\begin{proof}
We consider a typical run consisting of $c'n^2 \log n$ steps, $c'$ and appropriate constant, and show that the algorithm has produced an optimal solution with a constant probability within that phase.

From Lemma~\ref{lem:treesize}, we know that the expected tree size is at most $3n \ln n + 2n +1$ after $c'n^2 \log n$ steps. Hence, with probability at least $\alpha_1$ the tree is of size at most 
$4n \ln n$ where $\alpha_1$ is a constant that decrease with increasing $c'$.

Furthermore, it follows that with probability $\alpha_1$ the 
\end{proof}

\begin{proof}
We apply Markov's inequality to the expected value of $T_{max}$. We have

\[
Prob(T_{max} \geq \alpha \cdot E[T_{max}]) \leq \frac{1}{\alpha}
\]

Hence the expected optimization time is upper bounded by

Define $Prob(T_{max} = E[T_{max}] +k)$
Due to Markov's inequality we have 

$Prob(T_{max} = E[T_{max}] +k) \leq \frac{E[T_{max}]}{(E[T_{max}]+k)}$

The expected optimization is therefore upper bounded by

Note that 
\[
\sum_{k=1}^{\infty} Prob(T_{max} = E[T_{max}] +k) \leq 1
\]
Hence the expected optimization time is upper bounded by

\end{proof}

\begin{proof}
We already know that $\Delta(s) \leq \frac{k}{s}$ when $k$ variables are expressed.

\end{proof}

\begin{corollary}
\label{cor:order_tree_size}
The expected optimization times of \oneonegpsingle and \oneonegpmulti on \ORDER are $O(n^2)$.
\end{corollary}
\begin{proof}
TODO: I think the argument here is broken, because I don't see how we can make it a random walk
that's balanced enough not to make the tree blow up. Commented out in the TeX there is some writing
for if it is a balanced random walk, but even that argument is a bit circular and sketchy.

\end{proof}

}  
\section{Analysis for \MAJORITY}
\label{sec:majority}

We next consider the \MAJORITY problem. We start with some preliminary definitions.

\begin{definition}  For a given GP tree, let $c(x_i)$ be the number of $x_i$ variables and $c(\bar{x_i})$ be the number of negated $x_i$ variables present in the tree. For a GP tree representing a solution to the \MAJORITY problem, we define the \emph{deficit} in the $i$th variable by
$$
D_i = c(\bar{x}_i) - c(x_i).
$$
\end{definition}

\begin{definition}
In a GP tree for \MAJORITY, we say that $x_i$ is \emph{expressed} when $D_i \leq 0$ and $c(x_i) > 0$.
\end{definition}

The fitness of a tree $T$ is simply the number of variables that are expressed.

We note a property of \hvlMutateTwoPointOh for this particular problem that we will make use of later.
\begin{definition}
The substitution decomposability property (SDP) for \MAJORITY states that a substitution is exactly equivalent to a deletion followed by an insertion, which are accepted or rejected as a unit.
\end{definition}
This property follows from the fact that the order of the terminals has no bearing on the fitness of a solution for \MAJORITY.  The variable to be replaced by substitution is selected uniformly at random from the set of leaves of the tree. This is identical to how the variable to be deleted is chosen when using the deletion operator. Substitution then inserts a variable selected uniformly at random from the set of possible terminals, just as the insertion operator does. 

We begin our analysis, in \ref{sect:wcMajority}, with worst case bounds for \oneonegpsingle, \oneonegpssingle, and \oneonegpsmulti. \oneonegpsingle solves the problem quite efficiently, yielding polynomial-time worst-case complexity. However, not accepting neutral moves, as in \oneonegps, results in poor performance: \oneonegpssingle fails to terminate in the worst case, and \oneonegpsmulti requires a number of fitness evalutions exponential in the size of the initial tree. 

In \ref{sect:avgCaseMajority} we derive average case bounds that assume the initial solution tree has $2n$ terminals each selected uniformly at random from $L$. This random tree initialization allows us to bound the maximum deficit in any variable. We show that \oneonegpsingle runs in time $O\left(n T_{\max} \log \log(n)\right)$ in the average case. By contrast, \oneonegpssingle has a constant probability of failing to terminate, and so the expected runtime is infinite.

\subsection{Worst Case Bounds}\label{sect:wcMajority}

\subsubsection{\oneonegpsingle}

We will show here some properties of \oneonegpsingle on \MAJORITY and give a polynomial-time worst-case bound on the performance. Our analysis considers the evolution of the deficits $D_i$ over the course of the algorithm as $n$ parallel random walks. We will show that each positive $D_i$ reaches zero at least as quickly as a balanced random walk, which is the condition for the corresponding \xsubi to be expressed; this, then, gives us the expected number of operations that we are required to perform on a particular variable before it is expressed. Because these arguments do not easily extend to \oneonegpmulti, we omit from this section any treatment of that case.

We begin by establishing the validity of modeling the temporal sequence of each of the $D_i$ as a random walk.

\begin{lemma}
\label{lem:majority_rw}
For \oneonegpsingle on \MAJORITY:

a) The probability of proposing an operation that changes either the number of \xsubi or the number of \xsubibar is $\Omega\left(\frac{1}{n}\right)$.

b) If some \xsubi has a deficit $D_i = d > 0$, we require in expectation $O(dT_{\max})$ proposed operations involving that variable before it is successfully expressed, where $T_{\max}$ is the maximum number of nodes in our GP tree at any timestep of the algorithm.
\end{lemma}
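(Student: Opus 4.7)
For part (a), I would obtain the $\Omega(1/n)$ bound by restricting attention to a single class of proposals. A single application of \hvlMutateTwoPointOh chooses the insertion sub-operation with probability $1/3$, and in that case the inserted leaf is sampled uniformly from $L$, so $x_i$ or $\bar{x}_i$ is drawn with probability $2/(2n)=1/n$. Hence with probability at least $\tfrac{1}{3}\cdot\tfrac{1}{n}=\tfrac{1}{3n}$, the proposal changes $c(x_i)$ or $c(\bar{x}_i)$, and I do not need to account for deletion or substitution at all.

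For part (b), the plan is to model the sequence of values of $D_i$, sampled at those iterations whose proposal involves $x_i$, as a one-dimensional random walk on a bounded interval, and then appeal to a standard hitting-time bound. The central observation is that among proposals an insertion of $x_i$ and an insertion of $\bar{x}_i$ occur at identical rates of $\tfrac{1}{6n}$ each, are always accepted (an insertion cannot decrease the fitness on \MAJORITY, since it removes no variable and can only help its own variable become expressed), and change $D_i$ by $-1$ and $+1$ respectively. Conditioning on such a proposal therefore yields a symmetric simple random walk on $D_i$, which lives in the interval $[-T_{\max},T_{\max}]$ because $|D_i|\le c(x_i)+c(\bar{x}_i)\le T\le T_{\max}$ at every step.

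I would then invoke the classical hitting-time computation for a symmetric simple random walk on $\{0,1,\ldots,T_{\max}\}$ with absorption at $0$ and reflection at $T_{\max}$: solving $h(d)=1+\tfrac{1}{2}h(d-1)+\tfrac{1}{2}h(d+1)$ with $h(0)=0$ and $h(T_{\max})-h(T_{\max}-1)=1$ yields $h(d)=d(2T_{\max}-d)=O(dT_{\max})$. This shows that the expected number of proposals inserting $x_i$ or $\bar{x}_i$ before $D_i$ first reaches $0$ is $O(dT_{\max})$; along the way at least one insertion of $x_i$ has occurred, so $c(x_i)>0$ and $x_i$ is expressed the moment $D_i\le 0$.

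The hard part will be extending the bound to cover all proposed operations involving $x_i$, not only the insertions used to drive the walk. I would handle this by a stochastic-domination argument: when $D_i>0$ we have $c(\bar{x}_i)>c(x_i)$, so deletions involving $x_i$ are biased toward deleting $\bar{x}_i$ and produce a non-positive drift on $D_i$, while substitutions decompose via the SDP for \MAJORITY into a deletion followed by an insertion accepted as a unit, reducing their effect to already-handled $\pm 1$ moves and, if anything, filtering out some deleterious steps. Coupling the full process on $D_i$ (restricted to iterations whose proposal involves $x_i$) with the pure symmetric walk, the former hits $\{D_i\le 0\}$ no slower, and the $O(dT_{\max})$ bound transfers to the count of all such proposals.
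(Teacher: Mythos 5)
Your overall strategy is the one the paper uses: part (a) is identical (restrict to insertions, $\frac{1}{3}\cdot\frac{1}{n}$), and part (b) likewise treats $D_i$ as a random walk on $\{0,\dots,T_{\max}\}$ with an absorbing barrier at $0$, shows insertions are balanced, deletions are favorably biased because $c(\bar{x}_i)>c(x_i)$ when $D_i>0$, and reduces substitutions to insertion-plus-deletion via the SDP; the paper cites the Aleliunas et al.\ $O(kn)$ hitting-time bound where you solve the recurrence explicitly, which is a cosmetic difference. However, there are three places where your write-up has gaps that the paper's proof explicitly closes. First, your justification that insertions are always accepted --- ``an insertion cannot decrease the fitness on \MAJORITY'' --- is false in general: inserting $\bar{x}_j$ when $x_j$ is expressed with $D_j=0$ and $c(x_j)\ge 1$ unexpresses $x_j$ and lowers the fitness. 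What saves you is the weaker, correct statement the paper uses: insertions and deletions \emph{affecting the unexpressed variable $x_i$} are always accepted, because inserting $\bar{x}_i$ while $D_i>0$ is merely neutral.

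Second, hitting $D_i\le 0$ is not the same as expressing $x_i$, which also requires $c(x_i)>0$. Your claim that ``at least one insertion of $x_i$ has occurred along the way'' is valid only for the pure insertion walk; in the full process the walk can first reach $D_i=0$ via a deletion of the last remaining $\bar{x}_i$, leaving $c(x_i)=c(\bar{x}_i)=0$ and $x_i$ still unexpressed. The paper handles this terminal case separately: from that state the next insertion or substitution touching the variable adds $x_i$ with probability $\frac{1}{2}$, and otherwise one returns to $D_i=1$ and pays only $O(T_{\max})$ more by the same hitting-time bound, so the overall $O(dT_{\max})$ is unaffected --- but this step is needed and missing from your argument. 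Third, your coupling claim that the full process is ``no slower'' than the symmetric walk glosses over rejected proposals involving $x_i$ (e.g.\ substitutions rejected because their extraneous half would unexpress some other $x_j$); these still count toward the tally of proposed operations. The paper's fix is to note, via the SDP, that the probability a substitution affects $D_i$ is at most the sum of the corresponding insertion and deletion probabilities, so even if every substitution is rejected a constant fraction of proposals affecting $D_i$ are accepted, costing only a constant factor. All three gaps are patchable, and with them filled your proof coincides with the paper's.
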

\begin{proof}
a) To see that a particular operation involves \xsubi or \xsubibar with probability $\Omega\left(\frac{1}{n}\right)$, we simply note that the probability of inserting one of the two variables is $\frac{1}{3} \times \frac{2}{2n} = \Omega\left(\frac{1}{n}\right)$.

b) We address each of the three types of operations in turn and show that each is at least as favorable as a balanced random walk in terms of reducing $D_i$ to zero.

Insertion: The probability of inserting \xsubi into the tree is $\frac{1}{6n}$, which is the same as the probability of inserting \xsubibar. Therefore, given that we change $D_i$ with an insertion, we increase it or decrease it in a balanced manner, with probability $\frac{1}{2}$.

Deletion: The probability of a deletion changing $D_i$ is
$$ \frac{c(x_i) + c(\bar{x}_i)}{T}
$$
where $T$ is the size of the GP tree. Given that we do such a deletion, we increase $D_i$ with probability $\frac{c(x_i)}{c(x_i) + c(\bar{x}_i)}$ and decrease it with probability $\frac{c(\bar{x}_i)}{c(x_i) + c(\bar{x}_i)}$, since we pick the variable to delete uniformly at random. However, note that because $D_i > 0$, we have that $c(x_i) < c(\bar{x}_i)$, so the probability of decreasing $D_i$ is greater than the probability of increasing it, so this is actually slightly better than a balanced random walk for the purpose of reducing $D_i$.

Substitution: We now make use of the substitution decomposability property (SDP) defined previously to observe that substitution consists of a deletion followed by an insertion. Therefore, a substitution is simply equivalent to taking one or two steps that tend to reduce $D_i$ with probability at least $\frac{1}{2}$ if $D_i$ is greater than 0.

Consider the $1$-dimensional random walk on the integers $0, 1, \ldots, n$, with $n$ being a reflecting barrier and $0$ being an absorbing barrier. The expected time to reach $0$ when starting at $k$ is $O(kn)$, following the analysis for random walks on undirected graphs carried out in \cite{DBLP:conf/focs/AleliunasKLLR79}. This is precisely the setting we have for our random walk on the $D_i$ if we set $k = d$ and $n = T_{\max}$, so we have that the random walk performed by the $D_i$ reaches zero after at most $O(dT_{\max})$ accepted operations.

We now must address the question of how many operations on the variable must be proposed in order to accept $O(dT_{\max})$ of them. Note that if \xsubi is unexpressed, any insertion or deletion affecting $D_i$ will be accepted, since it cannot possibly decrease the fitness value. The probability of a substitution affecting $D_i$ is, by the SDP, less than or equal to the probability that an insertion affects $D_i$ plus the probability that a deletion affects $D_i$. Therefore, even if every substitution is rejected, we still accept a constant fraction of proposed operations that affect $D_i$, so we only require $O(dT_{\max})$ proposed operations involving \xsubi and \xsubibar in order to have $O(dT_{\max})$ accepted operations.



Once $D_i$ reaches zero, we are done and $x_i$ is expressed unless $c(x_i) = c(\bar{x}_i) = 0$. In this case, we clearly cannot do any more deletes, but will either add in an \xsubi or \xsubibar via an insertion or a substitution. Through either operation, we add each variable with probability $\frac{1}{2}$, and therefore successfully express \xsubi with probability $\frac{1}{2}$. In the case where we insert an \xsubibar and increase $D_i$ to one, we again apply our one-dimensional random walk result with $k = 1$ and $n = T_{\max}$ to see that we will return to zero again after only $O(T_{\max})$ additional moves, whereupon either \xsubi is present in the tree and we are done or we can once again attempt to add it. Because we expect to do this procedure only twice before succeeding, it only adds $O(T_{\max})$ steps, and therefore does not change our bound of $O(dT_{\max})$.
\end{proof}

This lemma allows us to establish an upper bound on the number of evaluations for \oneonegp on \MAJORITY given a bound on the largest deficit.

\begin{theorem}
\label{thm:majority_gp_overall}
Let $D = \max_i D_i$ for an instance of \MAJORITY initialized with $T$ terminals drawn from a set of size $2n$ (i.e. terminals $x_1, ..., x_n, \bar{x}_1, ..., \bar{x}_n$ ). Then the expected optimization time of \oneonegpsingle is
$$O(n \log n + DT_{\max} n \log \log n)$$
in the worst case.
\end{theorem}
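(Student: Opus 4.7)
The plan is to combine the per-variable random-walk bound from Lemma~\ref{lem:majority_rw} with a generalized (``double dixie cup'') coupon-collector argument over the $n$ variables. By Lemma~\ref{lem:majority_rw}(b), for each $i$ the walk on $D_i$ is absorbed at zero after $O(D_i T_{\max}) = O(D T_{\max})$ proposed operations touching $x_i$ or $\bar{x}_i$ in expectation, at which point $x_i$ becomes expressed. By Lemma~\ref{lem:majority_rw}(a), together with the symmetry of \hvlMutateTwoPointOh{} across the $2n$ terminals, each global step touches a given variable index with probability $\Theta(1/n)$, independently across steps. Thus the global trace of operations can be modeled, up to constant factors, as i.i.d.\ uniform draws from the $n$ variable indices, with each draw a potential ``token'' for its chosen variable.

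Given this reduction, the optimization time is dominated by the time needed for every one of the $n$ variables to accumulate $\Theta(D T_{\max})$ tokens. The classical Newman--Shepp bound for the generalized coupon collector states that the expected number of uniform draws needed to see each of $n$ coupons at least $k$ times is $n \ln n + (k-1) n \ln \ln n + O(n)$. Instantiating with $k = \Theta(D T_{\max})$ yields $O(n \log n + D T_{\max}\, n \log \log n)$, matching the theorem: the first summand is the ``first-copy'' coupon-collector cost of touching each variable at least once, and the second is the cost of topping every variable up to $\Theta(D T_{\max})$ touches.

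The main obstacle is the gap between Lemma~\ref{lem:majority_rw}(b), which controls only the expected per-variable hitting time, and the coupon-collector reduction, which wants a high-probability guarantee that once variable $i$ has received $\Theta(D T_{\max})$ operations, its walk has been absorbed. I would close this gap either by sharpening the walk analysis on $\{0,1,\ldots,T_{\max}\}$ using standard first-passage concentration (so that $\Pr[\tau_i > c D T_{\max}]$ decays fast enough in $c$ to union-bound over the $n$ variables at the cost of only a constant factor, preserving the asymptotic bound), or by a Markov-plus-restart argument in which phases of length $c D T_{\max}$ per variable each give constant absorption probability; the sharper tail approach is preferable since a naive restart/union-bound would lose an extra $\log n$ factor that is absent from the claimed bound. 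A secondary subtlety is that the $n$ walks are not strictly independent --- they share a single global timeline and the global parameter $T_{\max}$ --- but because Lemma~\ref{lem:majority_rw}(b) is uniform in the states of the other walks, the coupon-collector reduction is unaffected.
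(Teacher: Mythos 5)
Your proposal follows essentially the same route as the paper: the paper likewise combines Lemma~\ref{lem:majority_rw} with the generalized coupon collector bound of Myers and Wilf (the same $O(n\log n + kn\log\log n)$ asymptotics you attribute to Newman--Shepp), instantiated with $k = O(DT_{\max})$ touches per variable. The expectation-versus-fixed-threshold gap you flag is real, but the paper does not close it either; it simply plugs the expected $O(DT_{\max})$ operation count into the coupon-collector bound, so your awareness of the issue (and sketch of a concentration or restart fix) goes beyond the published argument rather than falling short of it. The one ingredient of the paper's proof you omit is the observation that an expressed variable $x_i$ can become unexpressed when a substitution simultaneously expresses some $x_j$ (forcing $D_i = 0$, $D_j = 1$ to swap); the paper disposes of this by noting it amounts to a relabeling of $i$ and $j$ and that such substitutions are at most a constant fraction of all moves, so the coupon-collector accounting is unaffected. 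You should add a sentence to that effect, since without it the per-variable ``token count until done'' reduction is not obviously monotone.
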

\begin{proof}
We draw upon a result from Myers and Wilf~\cite{myerswilf2004} about a generalized form of the coupon collector problem. If we have $n$ coupons and wish to acquire at least $k$ of each coupon, we need to draw, in expectation, $O(n \log n + k n \log \log n)$ coupons. When $k$ is at least $\log n$, this is a slight improvement over the naive bound of $O(k n \log n)$ from simply iterating the basic coupon collector problem $k$ times.

Lemma~\ref{lem:majority_rw} tells us two things. Firstly, we have that a proposed operation involves \xsubi or \xsubibar with probability $\Omega(\frac{1}{n})$, so we have a coupon collector problem with slightly perturbed coupon probabilities. Secondly, we find that we need to propose $O(DT_{\max})$ operations involving each terminal in order to express all of our variables. Plugging these into the bound described above yields an asymptotic requirement of $O(n \log n + DT_{\max} n \log \log n)$ fitness function evaluations, as desired.

The only wrinkle in this picture is that the coupon collector assumes that a variable is ``complete'' after a set number of coupons have been collected. While we do not accept moves that reduce the fitness value, an expressed variable $x_i$ could become unexpressed if, during the course of a substitution operation, another variable $x_j$ were simultaneously expressed. However, in this case, we must have had $D_i = 0$ and $D_j = 1$, and we have merely reversed the two, which amounts to a relabeling of the $x_i$ and $x_j$. Because the $D_i$ are the only state variables that we care about in this case, this move effectively does nothing except cause us to make a vacuous move. Because substitutions only make up $\frac{1}{3}$ of all of the proposed moves, such wasted moves can only make up a constant fraction of the total number of moves, and therefore do not change the asymptotics.
\end{proof}

As a corollary of \ref{thm:majority_gp_overall} we can bound the initial $D$ by considering tree initialization.

\begin{corollary}
\label{cor:majority_gp_wc}
When \MAJORITY is initialized with $m = O(n)$ terminals drawn from a set of size $2n$, the expected optimization time of \oneonegpsingle is
$$O(n^2 T_{\max} \log \log n)$$
\end{corollary}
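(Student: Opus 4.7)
The plan is to combine a trivial counting bound on the initial deficit $D$ with Theorem~\ref{thm:majority_gp_overall}. The theorem already does all the heavy lifting, so once $D$ is controlled, the corollary follows by routine substitution.

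First I would bound $D = \max_i D_i$ from above deterministically. Because $D_i = c(\bar{x}_i) - c(x_i) \le c(\bar{x}_i)$, and $\sum_i (c(x_i) + c(\bar{x}_i))$ equals the total number $m$ of terminals in the initial tree, we immediately get $D_i \le m$ for every $i$. The hypothesis $m = O(n)$ then gives $D = O(n)$, uniformly in the randomness of which terminals happen to be drawn.

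Next I would substitute this into the expression $O(n \log n + D T_{\max} n \log \log n)$ supplied by Theorem~\ref{thm:majority_gp_overall}, producing $O(n \log n + n^{2} T_{\max} \log \log n)$. To absorb the first term, I would observe that any tree representing an optimal \MAJORITY solution must express each of the $n$ variables and hence must contain at least $n$ leaves; so whenever the algorithm terminates we have $T_{\max} \ge n$, which makes $n \log n = O(n^{2} T_{\max} \log \log n)$ for $n$ sufficiently large. The claimed bound $O(n^{2} T_{\max} \log \log n)$ then follows directly.

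There is essentially no obstacle here: the corollary reduces to a one-line substitution into Theorem~\ref{thm:majority_gp_overall}. The only mild subtlety is whether one should exploit the randomness of initialization for a tighter bound. A probabilistic analysis treating each $c(\bar{x}_i)$ as roughly Binomial with mean $m/(2n) = O(1)$ would give a far smaller $D$ (for instance $D = O(\log n / \log \log n)$ with high probability via standard balls-and-bins concentration) and thus a sharper runtime estimate, but none of this is required for the stated asymptotic, so I would not pursue it in the proof.
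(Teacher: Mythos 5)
Your proof is correct and matches the paper's own argument: the paper likewise bounds $D \le m = O(n)$ by the trivial observation that a deficit cannot exceed the number of terminals, and then substitutes into Theorem~\ref{thm:majority_gp_overall}. Your extra remarks (absorbing the $n\log n$ term via $T_{\max}\ge n$, and the optional balls-and-bins sharpening, which the paper defers to Corollary~\ref{cor:majority_gp_ac}) are consistent with what the paper does.
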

\begin{proof}
This follows from Theorem~\ref{thm:majority_gp_overall} with $D = m$, since the deficit cannot be greater than the number of terminals in the tree.
\end{proof}


We can consider the outcome of the worst case tree initialization both intuitively and experimentally.  We have $D = m$ when all of the leaves consist of instances of one bar variable, say $\bar{x}_1$. Since the $\bar{x}_1$ occupy such a large fraction of the tree, they will frequently be substituted out or deleted.  This suggests that the balanced random walk argument is quite pessimistic given this circumstance. We thus expect that, in practice, this initial condition will be quickly erased. If we put $T_{\max} = O(n)$, we know, from the coupon collector problem, that after an initial phase of $O(n \log n)$ steps, we will have proposed a deletion on every leaf that was initialized in the GP tree. Because deletions are always accepted on negated variables, we will have deleted all of the initial $\bar{x}_1$ variables by the end of this ``erasure'' phase, and only expect to introduce at most $O(\log n)$ of any particular bar variable through insertions and substitutions. This implies that, after this relatively short phase, $D = O(\log n)$, giving an optimization time of $O(n^2 \log n \log \log n)$, a bound very close to the average-case optimization time we present in \ref{sec:maj_ac_nonstar}.

We experimented with this initialization to confirm our intuition. Figure~\ref{fig:majority_oneonegp_wc} shows the results of solving \MAJORITY using  \oneonegpsingle with increasing problem size and trees initialized with $2n$ leaves, each occupied by $\bar{x}_1$.  We tracked the number of fitness evaluations required and, even though we imposed no bound on the tree size, the order of growth relative to $n$ appears to be just barely superlinear. This empirical evidence supports the intuition that the worst-case performance is much closer to the average-case than Corollary~\ref{cor:majority_gp_wc} would suggest.

\begin{figure}[htp]
\centering
\includegraphics[width=60mm, trim=50mm 110mm 50mm 105mm]{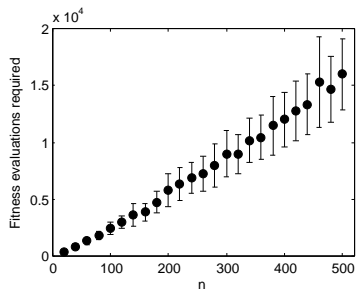}
\caption{Plot of the average optimization time given a ``bad'' initialization of size $2n$ with $D_1 = 2n$ for $x_1$. Fifty trials were used to compute each point. Circles indicate the mean number of fitness function evaluations for each value of $n$, and error bars show the standard deviation of the 50 trials.}
\label{fig:majority_oneonegp_wc}
\end{figure}

\subsubsection{\oneonegps}
\label{sec:maj_wc_star}

Unlike in the case of the \ORDER problem, where accepting or not accepting neutral moves makes no difference in the performance of the algorithm, for \MAJORITY, such a distinction matters tremendously. Intuitively, this behavior arises because there is a notion of ``working towards'' a solution here that is absent from the \ORDER problem. In \ORDER, our analysis relied on our ability to express an \xsubi by simply inserting it as a terminal early enough in the tree, which required only one step. However, in \MAJORITY, if there are $k$ \xsubi and $\ell$ \xsubibar present in the GP tree, at least $\lceil \frac{\ell - k}{2} \rceil$ mutation operations will be required to make up this deficit, all but the last of which will be neutral moves.

Because of the importance of neutral moves, we find that \oneonegpssingle and \oneonegpsmulti perform quite badly. Even when we initialize with a tree with size linear in $n$, the number of terminal symbols, we can demonstrate an initialization where \oneonegpssingle never terminates and \oneonegpsmulti takes an exponential amount of time to do so. Consider the tree \Tlopt which has as leaves the variables 
$$ x_1, x_2, x_3, ..., x_{n-1}, \underbrace{\bar{x_n}, \bar{x}_n ..., \bar{x}_n}_{n+1 \text{ of these}}
$$

\begin{theorem}
\label{thm:majority_gpstar_single_wc}
Let \Tlopt be the initial solution to \MAJORITY. Then the expected optimization time of \oneonegpssingle is infinite.
\end{theorem}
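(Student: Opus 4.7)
The plan is to show that $\Tlopt$ is a strict local optimum for \hvlMutateTwoPointOh: every offspring produced by a single application of the operator has fitness at most $f(\Tlopt)$. Since \oneonegpssingle updates $X$ only when $f(X') > f(X)$, no proposed offspring is ever accepted. With probability one the current solution remains $\Tlopt$ forever, the optimum of fitness $n$ is never reached, and the expected optimization time is therefore $+\infty$.

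First I would compute $f(\Tlopt)$. For each $i<n$ we have $c(x_i)=1$ and $c(\bar{x}_i)=0$, so $D_i = -1 \leq 0$ with $c(x_i)>0$ and thus $x_i$ is expressed; for $i=n$ we have $c(x_n)=0$ and $c(\bar{x}_n)=n+1$, so $D_n = n+1 > 0$ and $x_n$ is not expressed. Hence $f(\Tlopt)=n-1$. A strict improvement must reach fitness $n$, meaning $x_n$ becomes expressed while each $x_i$ with $i<n$ remains expressed. Equivalently, the mutation must drive $D_n$ from $n+1$ down to at most $0$ and introduce at least one $x_n$, without setting $c(x_i)=0$ for any $i<n$.

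Next I would rule out each of the three sub-operations in turn. Insertion adds a single leaf: the only insertion that touches $D_n$ at all is the insertion of $x_n$, which lowers $D_n$ only from $n+1$ to $n$, still positive. Deletion removes a single leaf: deleting any $x_i$ with $i<n$ immediately sets $c(x_i)=0$ and unexpresses $x_i$, lowering fitness, while deleting a $\bar{x}_n$ lowers $D_n$ only from $n+1$ to $n$, still positive. Substitution (at a leaf, by the algorithm's specification) removes one leaf and inserts another; the case split is on which leaf is removed. If the removed leaf is some $x_i$ with $i<n$, then $c(x_i)=0$ afterwards and $x_i$ becomes unexpressed, so the fitness strictly decreases. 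If the removed leaf is a $\bar{x}_n$, then $\Delta D_n \in \{-2,-1,0\}$, so the new $D_n$ is at least $n-1$, still positive for $n \geq 2$, and $x_n$ remains unexpressed while no other $x_i$ gains (all are already expressed). In every sub-case the resulting fitness is $\leq n-1$.

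The only subtle step is being exhaustive about substitution, since it is a compound remove/insert accepted atomically; one must check that no pairing of leaf removal with leaf insertion simultaneously preserves the expression of every $x_i$ ($i<n$) and drops $D_n$ below $1$. This is impossible because $D_n$ starts at $n+1$ and a single leaf substitution can change $c(\bar{x}_n)-c(x_n)$ by at most $2$. With all three sub-operations ruled out, the strict acceptance condition $f(X')>f(X)$ fails at every iteration; $X$ never changes, so the optimum is never found, and hence $E[T_{\mathrm{opt}}] = \infty$.
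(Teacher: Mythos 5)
Your proposal is correct and follows essentially the same approach as the paper's (much terser) proof: the key observation in both is that a single application of \hvlMutateTwoPointOh changes the deficit $D_n$ by at most two, so the deficit of $n+1$ can never be closed in one accepted step, and the strict acceptance rule of \oneonegpssingle traps the algorithm at $\Tlopt$ forever. Your exhaustive case analysis of the three sub-operations simply makes explicit what the paper states as ``clear.''
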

\begin{proof}
It is clear that, with one move, the deficit in $x_n$ can only be changed by at most two. There is a deficit of $n+1$ to make up, which is impossible, therefore \oneonegpssingle will never find its way out of this local optimum.
\end{proof}

\begin{theorem}
\label{thm:majority_gpstar_multi_wc}
Let \Tlopt be the current solution to \MAJORITY. Then the expected optimization time of \oneonegpsmulti is at least exponential in $n$.
\end{theorem}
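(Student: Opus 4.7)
The plan is to show that \Tlopt\ is such a steep local optimum that even a single iteration of \oneonegpsmulti has only super-exponentially small probability of producing an accepted offspring.

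First I would verify the starting state. In \Tlopt\ each of $x_1,\ldots,x_{n-1}$ is expressed, since $c(x_i)=1$ and $c(\bar{x}_i)=0$ for $i<n$, while $x_n$ is unexpressed because $D_n=c(\bar{x}_n)-c(x_n)=n+1$. The current fitness is therefore $n-1$. Because \oneonegpsmulti only accepts strict improvements, any accepted offspring must have fitness $n$, i.e., $x_n$ must become expressed, which in particular requires $D_n\leq 0$ in the offspring. Thus a single accepted iteration must reduce $D_n$ by at least $n+1$.

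Next I would examine how much a single elementary \hvlMutateTwoPointOh operation can decrease $D_n$. Insertion adds one leaf, so $D_n$ drops by at most $1$; deletion removes one leaf, so $D_n$ again changes by at most $1$; substitution removes one leaf and adds another, the most favourable case being replacing a $\bar{x}_n$ with $x_n$, which reduces $D_n$ by exactly $2$. Consequently a proposed mutation consisting of $k$ elementary operations can decrease $D_n$ by at most $2k$, so any iteration that escapes \Tlopt\ must draw $k\geq \lceil (n+1)/2\rceil$.

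Finally I would bound the probability of such a draw. Since $k=1+\mathrm{Pois}(1)$,
$$\Prob\!\left(k\geq \left\lceil\tfrac{n+1}{2}\right\rceil\right)=\Prob\!\left(\mathrm{Pois}(1)\geq \left\lceil\tfrac{n-1}{2}\right\rceil\right)\leq \sum_{j\geq \lceil (n-1)/2\rceil}\frac{e^{-1}}{j!},$$
which, via $j!\geq (j/e)^j$, is bounded by $e^{-\Omega(n\log n)}$. Each iteration performs one fitness evaluation and the events across iterations are independent, so the expected number of iterations until an accepted improvement -- and hence the expected optimization time starting from \Tlopt\ -- is at least the reciprocal of this probability, i.e., $2^{\Omega(n)}$.

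The hard part will be the (short) case analysis of the three sub-operations used to justify the ``at most $2$'' bound on the per-step change to $D_n$; the rest is immediate from the definition of \oneonegpsmulti\ together with a standard Poisson tail estimate. In particular, no random-walk or drift machinery is needed, because escape from \Tlopt\ hinges on a single iteration drawing an atypically large $k$ rather than on an accumulation of many improving moves.
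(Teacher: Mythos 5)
Your proposal is correct and follows essentially the same route as the paper's proof: identify that an accepted move must close the deficit of $n+1$ in $x_n$, observe that each elementary operation reduces the deficit by at most $2$ (substitution being the best case), conclude that the Poisson-distributed operation count must be at least $\lceil (n-1)/2\rceil$, and invert the resulting tail probability. Your version is slightly more explicit about the per-operation case analysis, which the paper leaves implicit, but the argument is the same.
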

\begin{proof}
The fitness value of \Tlopt is $n-1$, with $x_1$ through $x_{n-1}$ expressed, so the only way to improve the fitness is to make a move that expresses $x_n$. Therefore, the moves that achieve this are the only moves that will be accepted. We compute the probability of making such a move in this configuration in order to determine the expected time to make such a move.

Note that any mutation operation that successfully improves the fitness must make up for a deficit of $n+1$, which requires at least $\left \lceil \frac{n+1}{2} \right \rceil$ operations, assuming that we, in each case, substitute an $\bar{x}_n$ with an $x_n$. The number of moves per mutation is distributed as $1 + \text{Pois}(1)$, so the Poisson random variable must take a value of at least $\left \lceil \frac{n-1}{2} \right \rceil$. The probability of this when $\lambda = 1$ is given by
$$ \sum_{i = \left \lceil \frac{n-1}{2} \right \rceil}^\infty \frac{e^{-1}}{i!} \leq \frac{1}{\left(\frac{n-1}{2}\right)!} = O\left(\left(\frac{n}{2e}\right)^{-\frac{n}{2}}\right)
$$
by Stirling's formula.

We can take this probability as a (very weak) upper bound on the probability of improving the fitness. Inverting it, we see that the expected number of moves required is $\Omega\left(\left(\frac{n}{2e}\right)^{\frac{n}{2}}\right)$.
\end{proof}

\subsection{Average Case Bounds}\label{sect:avgCaseMajority}

To provide average case bounds we consider a GP tree which is initialized with what we term ``unity expectation'':  it has $2n$ terminals (leaves) each selected uniformly at random from the set of possible terminals. 

\ignore{ To provide average case bounds we reasonably initialize the GP tree according to Algorithm~\ref{alg:treeInit}. Unity expectation initialization generates a tree with $2n$ terminals, each selected uniformly at random from the set of possible terminals. Other properties of the tree are irrelevant to our analysis.

We will consider initializing our GP tree as follows. We start with a root join node, which has the capacity to accept two children. We choose a random ``opening'' in the tree at which to add a new join node (in this case, as either the left or right child of the root node). We iterate this process $2n-2$ times, which yields a tree that has $2n$ undetermined leaves, since adding a join node increases the number of leaves of the tree by one. Finally, we fill in the $2n$ leaves with terminals selected uniformly at random from the set of possible terminals. Figure~\ref{fig:unity_expectation_init} describes this procedure in more detail. We call this unity expectation initialization, since each terminal appears once in the GP tree in expectation.

\begin{algorithm}[{\small Unity expectation tree  initialization}]
\begin{enumerate}
\item[]
\item Create a root nonterminal $J$.
\item Initialize a set of ``leaf positions'' $L = \{ J_L, J_R \}$, where $J_L$ and $J_R$ represent
the left and right children of $J$, respectively.
\item If $L$ contains fewer than $2n$ leaf positions:
\begin{enumerate}
\item Select a random leaf position from $L$ and add a new join node $J'$ there in the tree.
\item Add $J'_L$ and $J'_R$ to $L$.
\item Go to 3.
\end{enumerate}
\item For each leaf position $p_i$ in $L$:
\begin{enumerate}
\item Add at position $p_i$ a terminal selected uniformly at random from the set of all possible
terminals, i.e. $\{ x_1, ..., x_n, \bar{x}_1, ..., \bar{x}_n \}$.
\item Remove $p_i$ from $L$.
\end{enumerate}
\end{enumerate}
\label{alg:treeInit}
\end{algorithm}

} 

\subsubsection{\oneonegp}\label{sec:maj_ac_nonstar}

The average case bound follows more or less directly from Theorem~\ref{thm:majority_gp_overall} once a result from the literature is applied to given an expected bound on the maximum initial deficit.

\begin{corollary}
\label{cor:majority_gp_ac}
For \MAJORITY with a terminal set of size $2n$ under unity expectation initialization, the expected optimization time of \oneonegpsingle is $O\left(n T_{\max} \log n\right)$.
\end{corollary}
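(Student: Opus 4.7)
The idea is to invoke Theorem~\ref{thm:majority_gp_overall} and substitute an average-case bound on the maximum initial deficit $D = \max_i D_i$ obtained from a balls-into-bins analysis.

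First I would recast unity expectation initialization as a balls-into-bins process. The $2n$ leaves are filled by sampling terminals independently and uniformly from the $2n$-element set $L = \{x_1,\bar{x}_1,\ldots,x_n,\bar{x}_n\}$; equivalently, $2n$ balls are thrown uniformly at random into $2n$ bins (one bin per terminal type). Consequently each of $c(x_i)$ and $c(\bar{x}_i)$ is marginally $\mathrm{Bin}(2n,1/(2n))$, and
\[
D \;=\; \max_i\bigl(c(\bar{x}_i) - c(x_i)\bigr) \;\le\; \max_{u \in L} c(u),
\]
so $D$ is dominated by the maximum bin load in this balanced balls-into-bins model.

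The key ingredient is the classical estimate that when $m$ balls are thrown uniformly into $m$ bins, the expected maximum load is $\Theta(\log m / \log\log m)$, and the same order bound holds with probability $1 - o(1)$. Applied with $m = 2n$ this gives $E[D] = O(\log n / \log\log n)$. Plugging into Theorem~\ref{thm:majority_gp_overall} yields
\[
E[\mathrm{runtime}] \;=\; O(n \log n) \;+\; E[D]\cdot T_{\max}\cdot n \log\log n \;=\; O\!\left(\frac{\log n}{\log\log n}\right)\cdot T_{\max}\cdot n \log\log n \;=\; O(n T_{\max} \log n),
\]
where the additive $O(n\log n)$ term is absorbed since any tree large enough to be optimal has $T_{\max} \ge n$.

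The main obstacle is that $D$ and $T_{\max}$ are not independent, so one cannot blindly factor $E[D\cdot T_{\max}]$ as $E[D]\cdot T_{\max}$. I would handle this exactly in the spirit of Corollary~\ref{cor:majority_gp_wc}: treat the bound of Theorem~\ref{thm:majority_gp_overall} as a pointwise statement parameterised by the realised $T_{\max}$ of the run, and take expectation only over the initialisation randomness that determines $D$. For a fully rigorous separation, one can split on the event $\{D \le c\log n / \log\log n\}$, which holds with probability $1 - n^{-\Omega(1)}$ by the standard tail bound for the maximum bin load, and use the deterministic estimate $D \le 2n$ on the complementary event; the rare-event contribution is then dwarfed by the main term.
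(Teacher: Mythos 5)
Your proposal is correct and follows essentially the same route as the paper: the paper also bounds $D$ by the maximum multiplicity of any $\bar{x}_i$ under unity expectation initialization, citing Raab and Steger's balls-into-bins analysis for the $O(\log n/\log\log n)$ high-probability bound, and then substitutes into Theorem~\ref{thm:majority_gp_overall}. Your handling of the rare event (conditioning on $D$ exceeding the threshold, which happens with probability $O(n^{-2})$, and falling back on the worst-case bound of Corollary~\ref{cor:majority_gp_wc}) is exactly the paper's argument as well.
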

\begin{proof}
A result from Raab and Steger~\cite{raabsteger1998} tells us that, with probability at least $1 - O\left(\frac{1}{n^k}\right)$ for any integer $k$, no \xsubibar appears more than $O\left(\frac{\log n}{\log \log n}\right)$ times in the GP tree, so $D = O\left(\frac{\log n}{\log \log n}\right)$. Set $k = 2$, so that the probability of having a larger deviation is $O\left(\frac{1}{n^2}\right)$. The worst-case bound of $O\left(n^2 T_{\max} \log \log n\right)$ from Corollary~\ref{cor:majority_gp_wc} ensures that these uncommon cases contribute only an $O(T_{\max} \log \log n)$ term to the expectation. Substituting $D = O\left(\frac{\log n}{\log \log n}\right)$ into the expression in Theorem~\ref{thm:majority_gp_overall} gives us the desired bound for the common case, which is also the overall runtime bound.
\end{proof}

\subsubsection{\oneonegps}
\label{sec:maj_ac_star}

Assuming unity expectation initialization, we can improve on our result from \ref{sec:maj_wc_star} and show \oneonegpssingle has a constant probability of failing to terminate. Our general strategy will be to prove that there is constant probability that, when starting with a deficit of size three in $x_1$, this deficit will be preserved until the fitness is $n-1$. At this point, when all the other variables are expressed, there will remain a gap that cannot be closed in a single step. Such a deficit could disappear over the course of the algorithm because substitution has the ability to shrink the deficit (by removing  $\bar{x}_1$ and replacing it with a $x_i$ in order to express that variable), but this proof shows that there is nonetheless a constant probability of the deficit being preserved.

%


First, we establish a lemma about the prevalence of constant-size deficits arising based on our initialization.

\begin{lemma}
\label{lem:majority_gpstar_init}
Suppose we have a $2n$-length instance of the \MAJORITY problem with unity expectation initialization. Let $A_k$ denote the event that $\bar{x}_1$ appears exactly $k$ times without $x_1$ appearing at all, where $k$ is any constant. Then Pr$(A_k)=\Omega(1)$.
\end{lemma}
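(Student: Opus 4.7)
The plan is to unpack the unity-expectation initialization as a simple multinomial experiment and then compute the probability of $A_k$ in closed form, showing that for fixed $k$ it tends to a positive constant as $n \to \infty$.

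First I would observe that under unity-expectation initialization the $2n$ leaves are filled independently, each uniformly at random from the $2n$-element terminal set. Let $X$ be the number of occurrences of $\bar{x}_1$ and $Y$ the number of occurrences of $x_1$ among the leaves. Then $(X,Y,2n-X-Y)$ is multinomial with $2n$ trials and category probabilities $(1/(2n),\,1/(2n),\,(2n-2)/(2n))$. The event $A_k$ is exactly $\{X=k,\,Y=0\}$, so
\[
\Pr(A_k) \;=\; \binom{2n}{k}\left(\frac{1}{2n}\right)^{\!k}\!\left(1-\frac{1}{n}\right)^{\!2n-k}.
\]

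Next I would take the limit $n\to\infty$ with $k$ held constant: $\binom{2n}{k}(2n)^{-k} \to 1/k!$ and $(1-1/n)^{2n-k} \to e^{-2}$, so
\[
\Pr(A_k) \;\longrightarrow\; \frac{e^{-2}}{k!}\;>\;0,
\]
which establishes the $\Omega(1)$ claim. (Intuitively, $X$ and $Y$ are each asymptotically Poisson(1) and almost independent, giving the Poisson-paradigm estimate $e^{-1}\cdot e^{-1}/k!$.)

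There is no real obstacle here: the argument is essentially an asymptotic evaluation of a single multinomial probability. The only thing to be careful about is that $k$ is a constant independent of $n$, so the binomial coefficient and the $(1-1/n)^{2n-k}$ factor both converge to constants and cannot be collapsed by hidden $n$-dependence. To make the bound fully rigorous rather than asymptotic, one can simply note that for all sufficiently large $n$ the right-hand side exceeds, say, $\tfrac{1}{2}\cdot e^{-2}/k!$, and absorb the finitely many small $n$ into the implicit constant hidden by $\Omega(\cdot)$.
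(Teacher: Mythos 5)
Your proof is correct and follows essentially the same route as the paper: both write down the exact probability $\binom{2n}{k}(2n)^{-k}(1-1/n)^{2n-k}$ (the paper via counting sequences, you via the multinomial distribution, which are the same computation) and then show it is bounded below by a positive constant for fixed $k$. The paper uses an explicit inequality chain where you take a limit, but this is a cosmetic difference.
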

\begin{proof}
To compute Pr$(A_k)$, we count the number of $2n$-length sequences of terminals for which this is true and divide by the total number of possible sequences. Under $A_k$, we must have $k$ instances of $\bar{x}_1$ and zero instances of $x_1$, so there are ${2n \choose k}$ positions that can be occupied by the $\bar{x}_1$ and the remaining $2n-k$ positions should each be occupied by one of the $2n-2$ elements that are not $x_1$ or $\bar{x}_1$. In total, there are $(2n)^{2n}$ possible $2n$-length sequences of terminals. Combining these facts yields
\begin{align*}
\text{Pr}(A_k) &= \frac{{2n \choose k} (2n-2)^{2n-k}}{(2n)^{2n}}\\
&= \frac{1}{k!} \frac{(2n)!}{(2n-k)!(2n-2)^k} \left(\frac{2n-2}{2n}\right)^{2n}\\
&\geq \frac{1}{k!} \left(\frac{2n-k}{2n-2}\right)^k \left(1 - \frac{1}{n}\right)^{2n}\\
&= \frac{1}{k!} \times \Omega(1) \times \Omega(1)\\
&= \Omega(1)
\end{align*}
assuming that $k$ is a constant.
\end{proof}

Next, we lower-bound the size of the GP tree when running \oneonegpssingle on \MAJORITY. The tree must be large enough so that we are not too likely to substitute out the $\bar{x}_1$ over the course of the algorithm.

\begin{lemma}
\label{lem:majority_gpstar_treesize}
Using \oneonegpsingle on \MAJORITY with any initialization of size $2n$, the size of the GP tree is always greater than $\frac{7n}{6}$ with probability one.
\end{lemma}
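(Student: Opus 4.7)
The plan is to prove the stronger deterministic bound $L(t) \geq \frac{5n}{4}$, which immediately implies $L(t) > \frac{7n}{6}$. Let $D(t)$, $I(t)$, $C(t)$ denote the cumulative counts of accepted deletes, inserts, and substitutions by time $t$, so the leaf count is $L(t) = 2n + I(t) - D(t)$. The first ingredient is a ``fitness budget'': every accepted move strictly increases fitness, a delete or insert contributes exactly $1$ to fitness and a substitution contributes at least $1$, and fitness cannot exceed $n$. This yields $D(t) + I(t) + C(t) \leq n$, and in particular $C(t) \leq n - D(t)$.

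The second (and central) ingredient is a structural invariant: in \oneonegpssingle on \MAJORITY, once a variable is expressed it remains expressed for the rest of the run. A delete or insert affects only one variable's expression status, so an accepted single-leaf move that unexpresses a variable would drop the total fitness by $1$ and be rejected. A substitution touches two variables, but a case analysis on the four sign combinations of the removed/added leaves (including the special subcase where both leaves involve the same variable) shows that any accepted substitution which unexpresses one previously expressed variable would need to simultaneously express at least two other variables---impossible, since a substitution affects only two variables in total and one of them is the one being unexpressed. Verifying this monotonicity claim rigorously is the main obstacle of the proof; the rest is essentially bookkeeping.

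Given monotonicity, I bound $D(t)$. Call variable $i$ \emph{delete-expressed} if the first move to express it is an accepted deletion, occurring at some time $t_i$. Because $i$ must be unexpressed at every earlier time, no accepted insertion of $x_i$ can precede $t_i$ (such an insertion would drop $D_i$ from $1$ to $0$ with $c(x_i) \geq 1$ and express $i$, contradicting monotonicity), and no accepted insertion of $\bar{x}_i$ can either (its fitness contribution is $\leq 0$). Therefore the leaves of variable $i$ present just before its deletion come only from the initial tree and from substitutions. The delete's acceptance condition forces $c(x_i) \geq 1$ and $c(\bar{x}_i) \geq 2$ just before $t_i$, so $L_i(t_i^-) \geq 3$. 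Summing over the delete-expressed variables and using that each substitution adds at most one new leaf of at most one variable,
\[
3\, D(t) \;\leq\; \sum_{i \text{ delete-exp.\ by } t} L_i(t_i^-) \;\leq\; \sum_i L_i(0) + C(t) \;\leq\; 2n + C(t).
\]
Substituting $C(t) \leq n - D(t)$ gives $4D(t) \leq 3n$, so $D(t) \leq \tfrac{3n}{4}$ and hence $L(t) \geq 2n - \tfrac{3n}{4} = \tfrac{5n}{4} > \tfrac{7n}{6}$.
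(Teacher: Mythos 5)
Your proof is correct (note that, like the paper's own argument, it really concerns \oneonegpssingle --- the strict-acceptance variant --- despite the lemma's statement naming \oneonegpsingle; both proofs need every accepted move to strictly increase fitness). You share the paper's two foundational observations: a fitness-improving deletion requires at least three leaves of the affected variable ($c(x_i)\geq 1$, $c(\bar{x}_i)\geq 2$), and the fitness budget caps the total number of accepted moves at $n$. Where you diverge is the bookkeeping. The paper partitions the \emph{variables} by their initial leaf count (at most $2n/3$ can start with three or more leaves) and then uses a pairing argument --- each remaining variable that is to be delete-expressed must first be fed leaves by a substitution that itself expresses some other variable --- to conclude that at least $n/6$ accepted moves are non-deletions, hence $D \leq 5n/6$ and $L > 7n/6$. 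You instead run a global charging argument: every delete-expressed variable owns at least three leaves at the moment of its expressing deletion, none of which can have arrived via an accepted insertion (you justify this through the explicit monotonicity invariant, which the paper leaves implicit), so $3D \leq 2n + C$; combined with $C \leq n - D$ this gives $D \leq 3n/4$ and the strictly stronger bound $L \geq 5n/4$. Your route is somewhat longer because you must establish monotonicity of expression as a separate step, but it buys a cleaner inequality chain, a better constant, and it sidesteps the paper's slightly informal ``at most half can be expressed by a deletion'' pairing step, which is the least rigorous part of the published proof.
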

\begin{proof}
A deletion can only improve the fitness if we delete some \xsubibar when $c(x_i) = n$ and $c(\bar{x}_i) = n + 1$, with $n$ positive. Such a configuration requires at least three occurrences of \xsubi and \xsubibar in the GP tree, so at most $\frac{2n}{3}$ variables can be present in this fashion initially. Of the at least $\frac{n}{3}$ variables that remain, at most half can be expressed by a deletion, because they must be first put into this configuration during the course of a substitution that expresses some other variable $x_j$. Therefore, we are forced to accept at least $\frac{n}{6}$ insertions or substitutions over the course of the algorithm, giving us an upper bound of $\frac{5n}{6}$ on the number of deletions accepted. This in turn guarantees that our tree always remains larger than $2n - \frac{5n}{6} = \frac{7n}{6}$.
\end{proof}

Finally, we can prove the claim directly.

\begin{theorem}
\label{thm:majority_gpstar_single}
With probability $\Omega(1)$, the optimization time of \oneonegpssingle with unity expectation initialization is infinite on \MAJORITY.
\end{theorem}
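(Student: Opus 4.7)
My plan is to condition on a favorable initialization fixing $D_1 = 3$ and $c(x_1)=0$, and then argue that with constant probability the deficit $D_1$ stays at $3$ for the entire run, which prevents $x_1$ from ever being expressed. The core observation is that a single application of \hvlMutateTwoPointOh can change $c(\bar{x}_1)-c(x_1)$ by at most $2$ in absolute value (insertion and deletion by $\pm 1$ each, substitution by at most $\pm 2$), so whenever $D_1 \geq 3$ no single operation can drop $D_1$ to $\leq 0$; in particular $x_1$ cannot become expressed in one step while $D_1 \geq 3$. If $D_1 \geq 3$ holds forever, the fitness cap $n-1$ is the best reachable and the only improving move would be to express $x_1$, which is impossible in one step, so the algorithm runs forever.

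I would first invoke Lemma~\ref{lem:majority_gpstar_init} with $k=3$ to obtain, with probability $\Omega(1)$, an initial tree having exactly three $\bar{x}_1$-leaves and no $x_1$-leaf. Condition on this event together with the bound $T\geq 7n/6$ throughout from Lemma~\ref{lem:majority_gpstar_treesize}. I then classify the accepted moves that can decrease $D_1$ while $D_1 \geq 3$: since \oneonegpssingle accepts only strict fitness improvements and no one-step expression of $x_1$ is possible, any $D_1$-decreasing accepted move must instead express some other $x_j$. A case analysis restricts these to two types: (b) substituting a $\bar{x}_1$-leaf by some $x_j$ in a way that expresses $x_j$, and (f) substituting some $\bar{x}_k$-leaf ($k\neq 1$) by $x_1$ in a way that expresses $x_k$. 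All other $D_1$-changing candidates (insertion of $x_1$, deletion of $\bar{x}_1$, substitution $\bar{x}_1\to x_1$) only touch the $x_1$-counts and hence cannot improve the fitness.

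I then bound $P(\mathrm{bad}\mid \mathrm{accept})$ at every reachable state by pairing each bad move with a much more likely benign accepted move expressing the same variable. Each (b)-move has per-proposal probability $c(\bar{x}_1)/(6nT)$ per eligible $j$, whereas the alternative ``insert $x_j$'' has probability $1/(6n)$, a ratio of $c(\bar{x}_1)/T$; each (f)-move substituting a particular $\bar{x}_k$-leaf has probability $1/(6nT)$, compared with $1/(3T)$ for deleting that same leaf, a ratio of $1/(2n)$. Using $T\geq 7n/6$ and the fact that $c(\bar{x}_1)=3$ is preserved on the trajectory where no (b) move occurs (the only accepted moves that can grow $c(\bar{x}_1)$ are substitutions $\bar{x}_k\to \bar{x}_1$, which I bound by an identical comparison against delete-that-leaf and add to the bad class), the combined ratio is $O(1/n)$. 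Since at most $n-1$ accepted moves can occur while $x_1$ is unexpressed (each strictly improves fitness, and fitness is capped at $n-1$ on the good event), a product-of-survival-probabilities argument $\prod_i(1-O(1/n)) \to e^{-\Theta(1)} = \Omega(1)$ yields a constant probability that no bad move ever occurs; on this event $D_1$ stays at $3$ throughout, $x_1$ is never expressed, and the expected optimization time is infinite.

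The hardest part will be making the constants line up: the per-accepted-move comparison must give a concrete bound of the form $c/n$ with $c$ a universal constant at every reachable state, which forces me to extend the ``bad'' class to include the accepted moves that can grow $c(\bar{x}_1)$, and to handle borderline subcases such as the substitution $\bar{x}_1 \to x_j$ when $x_j$ is initially absent from the tree (so $d_j=0$, $c(x_j)=0$) and the move is accepted purely by introducing $x_j$.
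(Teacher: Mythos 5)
Your proposal is correct and follows essentially the same route as the paper: a constant-probability initialization with $c(x_1)=0$, $c(\bar{x}_1)=3$ via Lemma~\ref{lem:majority_gpstar_init}, the observation that a deficit of three cannot be closed in one accepted move, a per-improving-step preservation probability of $1-O(1/n)$ using the tree-size bound of Lemma~\ref{lem:majority_gpstar_treesize}, and the product $\left(1-O(1/n)\right)^{n-1}=\Omega(1)$ over the at most $n-1$ fitness improvements. The only difference is local: where the paper uses the SDP to analyze the ``extraneous'' insertion or deletion inside an accepted substitution, you bound $\Pr(\text{bad}\mid\text{accept})$ by injectively pairing each deficit-disturbing substitution with a strictly more probable benign insertion or deletion that expresses the same variable --- an equivalent and, if anything, slightly more explicit accounting.
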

\begin{proof}
Lemma~\ref{lem:majority_gpstar_init} tells us that, with a constant probability, we initialize one of the variables, say $x_1$, with $c(x_1) = 0$ and $c(\bar{x}_1) = 3$. We now show that, also with a constant probability, such a deficit is preserved during the course of the expression of at most $n-1$ of the other variables.

We make such an argument by induction. Define the $j$th step of the algorithm as the period after $j$ variables have been expressed, at the end of which we propose the move that expresses the $j+1$st move. Suppose that at the $j$th step, it is true that $c(x_1) = 0$ and $c(\bar{x}_1) = 3$. The $j+1$st variable expressed cannot possibly be $x_1$, since there is no way to make up a deficit of three with a single move. If the move we accept to express the $j+1$st variable is an insertion or a deletion, we preserve our deficit of three and do not change the state of the variable $x_1$ at all, since we must either insert some variable in the set $\{x_2, ..., x_n\}$ or delete some variable in the set $\{\bar{x}_2, ..., \bar{x}_n\}$.

If we express the $j+1$st variable with a substitution, however, it is possible that we might insert an $x_1$ or delete one of the $\bar{x}_1$. An accepted substitution must either replace some variable with a variable in the set $\{x_2,...,x_n\}$ or substitute out some $\{\bar{x}_2, ..., \bar{x}_n\}$. However, a substitution also involves an ``extraneous'' insertion or deletion, by the SDP. If this operation impacts a variable different than the $j+1$st variable we are expressing, it must be an operation that, on its own, would keep the fitness constant. For an extraneous insertion, we note that it is always admissible to insert any of the $n$ symbols in the set $\{x_1, x_2, ..., x_n\}$ without decreasing the fitness. Therefore, the probability of inserting neither $x_1$ nor $\bar{x}_1$ in a neutral or better move is at least $1 - \frac{2}{n}$.

If the extraneous operation is a deletion, we note that it is always possible to delete at least $\frac{T-n}{2}$ terminals, where $T$ is the current tree size. Any variable expressed with $c(x_i) = 1$ and $c(\bar{x}_i) = 0$ cannot be removed, so there might be as many as $n$ terminals forbidden for this reason. For any variable not in this configuration, we have one of two cases. If the variable is unexpressed or is expressed with a deficit less than or equal to -1, any occurrence of $x_i$ or $\bar{x}_i$ can safely be deleted without decreasing the fitness. If the variable is expressed with a deficit of zero, there must be at least as many $\bar{x}_i$ as there are $x_i$, and any of these $\bar{x}_i$ can be safely deleted. Therefore, we set aside at most $n$ ``singleton'' symbols that cannot be deleted, and of those remaining, it must always be acceptable to delete at least half, yielding $\frac{T-n}{2}$.

We therefore preserve our three $\bar{x}_1$ variables with probability at least
$$\frac{\frac{T-n}{2} - 3}{\frac{T-n}{2}} = 1 - \frac{6}{T-n}
$$
We now invoke the result from Lemma~\ref{lem:majority_gpstar_treesize}. Because the size of the tree is at least $\frac{7n}{6}$ at all times, we can lower-bound the probability of preserving the $\bar{x}_1$ as $1 - \frac{36}{n}$.

These situations (extraneous inserts and extraneous deletes) are mutually exclusive, and of the two, the deletes are the more probable to interfere with our $x_1$ setup. Nevertheless, the probability of preserving our deficit of three in $x_1$ from the $j$th step to the $j+1$st step is at least $1 - \frac{O(1)}{n}$. Because there are at most $n-1$ such steps of the algorithm, our overall probability of preserving the deficit is
$$ \left(1 - \frac{O(1)}{n}\right)^{n-1} = \Omega(1)
$$
We have constant probability of initializing with such a deficit, and a constant probability of preserving the deficit, in which case the algorithm never terminates. Therefore, with constant probability, \oneonegpssingle never terminates on \MAJORITY.
\end{proof}

\begin{corollary}
\label{cor:majority_gpstar_single_bound}
Using unity expectation initialization, the expected optimization time of \oneonegpssingle on \MAJORITY  is infinite.
\end{corollary}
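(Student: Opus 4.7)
The plan is to observe that this corollary is an essentially immediate consequence of Theorem~\ref{thm:majority_gpstar_single}, which already establishes that with probability $\Omega(1)$ the algorithm fails to terminate. The only thing left to do is translate ``constant probability of never terminating'' into ``infinite expectation'', which is a standard measure-theoretic observation but worth spelling out in a short paragraph.

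Concretely, let $T$ denote the (possibly infinite) optimization time of \oneonegpssingle on \MAJORITY under unity expectation initialization, and let $c > 0$ be the constant lower bound on the failure probability guaranteed by Theorem~\ref{thm:majority_gpstar_single}. For every finite $N$, the event $\{T = \infty\}$ is contained in $\{T \geq N\}$, so $\Pr(T \geq N) \geq c$. Using the tail-sum representation of expectation, or simply the Markov-style bound $\E{T} \geq N \cdot \Pr(T \geq N) \geq Nc$, and letting $N \to \infty$ yields $\E{T} = \infty$. That completes the proof.

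There is no real obstacle here: no new combinatorial or probabilistic analysis is required, and no tree-size bounds or drift arguments need to be revisited. The only thing to be careful about is not to claim more than Theorem~\ref{thm:majority_gpstar_single} gives: the theorem provides a constant-probability event of non-termination, which is already enough to blow up the expectation, so we should not conflate this with an almost-sure non-termination statement.
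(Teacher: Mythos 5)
Your proposal is correct and matches the paper's approach: the paper likewise derives the corollary directly from Theorem~\ref{thm:majority_gpstar_single}, simply stating that it follows immediately. Your additional paragraph spelling out why a constant-probability event of non-termination forces $E(T) = \infty$ is a harmless (and arguably helpful) elaboration of the same one-line argument.
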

\begin{proof}
This follows directly from Theorem~\ref{thm:majority_gpstar_single}  
\end{proof}

While Theorem~\ref{thm:majority_gpstar_single} does demonstrate that the probability of getting stuck in a local optimum is at least a constant, the actual constant yielded by the proof is rather small. However, our proof technique made several very conservative assumptions for simplicity. To investigate further, we tried to solve \MAJORITY with \oneonegpssingle experimentally, observing when we would get stuck in a local minimum. Experimentally, the actual probability of \oneonegpssingle failing to converge to the optimum is actually quite high, as demonstrated by Figure~\ref{fig:majority_oneonegpssingle}.
\begin{figure}[htp]
\centering
\includegraphics[width=60mm,trim=50mm 110mm 50mm 110mm]{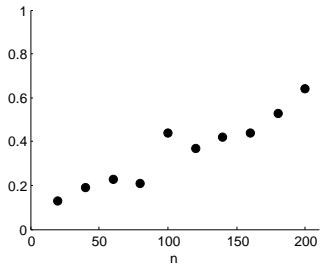}
\caption{Plot showing the probability of \oneonegpssingle failing to terminate on \MAJORITY when the initial solution tree has $2n$ terminals each selected uniformly at random, i.e. with unity expectation initialization. Each probability was determined empirically over the course of 100 simulations for each value of $n$.}
\label{fig:majority_oneonegpssingle}
\end{figure}

However, we do not know how to show a similar result for \oneonegpsmulti. We note that difficult \MAJORITY instances, such as \Tlopt presented in \ref{sec:maj_wc_star}, are exponentially unlikely to occur when the initial solution tree has $2n$ terminals each selected uniformly at  random. From Raab and Steger~\cite{raabsteger1998}, we know that deficits larger than logarithmic occur with exponentially small probability, and in any case large deficits should tend to equalize over the course of the algorithm execution, even if we only accept a linear number of moves. However, if the last unexpressed variable has a deficit of size $k$, we will require at least $\Omega\left(n^{-\frac{k}{2}}\right)$  steps to correctly substitute out enough instances of \xsubibar for \xsubi even in the best case, so unless $k$ can be bounded at a constant, we will have an expected runtime that is superpolynomial.


\section{Summary and Discussion}\label{sec:discussion}

Table 1 aggregates our expected optimization time results for all algorithm variants and each problem.

\begin{table}[htdp]
\begin{center}
\begin{tabular}{|c|c|c|}\hline
& \multicolumn{2}{|c|}{ORDER} \\ \hline
& \oneonegp & \oneonegps \\ \hline
single & $O(nT_{\max})$ w.c. $\dagger$ & $O(n^2)$ w.c. \\ \hline
multi  & $O(nT_{\max})$ w.c. $\dagger$ & $O(nT_{\max})$ w.c. $\dagger$\\ \hline
\multicolumn{3}{c}{} \\ \hline
&  \multicolumn{2}{|c|}{MAJORITY} \\ \hline
& \oneonegp & \oneonegps \\ \hline 
single & $O(n^2 T_{\max} \log n)$ w.c. $\dagger$ & $\Omega(\infty)$ a.c. \\ 
       & $O(n T_{\max} \log n)$ a.c. & \\ \hline
multi  & ? & $\Omega\left(\left(\frac{n}{2e}\right)^\frac{n}{2}\right)$ w.c.\\ \hline
\end{tabular}
\label{tab:results}
\end{center}
\caption{Results of the computational complexity analysis for our sample problem. We use w.c.~to denote a worst-case bound and a.c.~to denote an average-case bound. The daggers indicate where we conjecture that better bounds exist.}
\end{table}

From the perspective of a GP practitioner, the insights provided by this rigorous analysis may be more valuable than the complexity results themselves. In \ref{sect:algComp} we discuss how our treatment sheds light upon the important but subtle interactions between a problem, the acceptance criterion, and the genetic operator. In \ref{sect:mutateDiscussion}, we discuss the impact of the sub-operations on the mutation operator we considered. In \ref{sect:method}, we address the implications of our design and analysis methodology for practical GP algorithm design. Section \ref{sec:disc_analysis} covers some of our analysis techniques, and finally \ref{sec:conclusion} presents future work avenues and concludes.

\subsection{Accepting Neutral Moves in \ORDER and \MAJORITY}\label{sect:algComp}

It might initially seem immaterial whether or not we accept neutral moves with our genetic operator for \ORDER and \MAJORITY. However, our analysis provides rigorous evidence that the differences in performance between \oneonegp and \oneonegps are substantial for both of these problems. Similar results have already been obtained in the context of evolutionary algorithms for binary representations~\cite{JWplateau}. 

\ORDER's focus on condition semantics gives it the property that only the first occurrence of each terminal matters. A large tree makes the probability of improvements smaller because many of the mutations will change variables that have no effect on expression, being sequentially later than earlier occurrences of those same variables. Therefore, not accepting neutral moves helps prevent ``bloat'' and using \oneonegps is significantly advantageous.  \oneonegp's acceptance of neutral moves causes a feedback loop that stimulates growth of the tree: there is a slight bias towards accepting insertions as opposed to deletions, which makes the tree large, which increases the time to find an improvement and results in many neutral insertions, which increase the tree size even more. In general, to solve \ORDER with runtime performance that respects the complexity analysis, the tree must not grow too large, and not accepting neutral moves assures this.

\ignore{In \ORDER, the main factor that can harm our analysis is the tree becoming too large. Because only the first occurrence of each terminal matters, a large tree makes it harder to propose useful moves, since many of our mutations will change variables that are ``covered up'' by earlier occurrences of those some variables. In this case, using the star variant of the operator causes us to accept far fewer moves and gives us a better performance guarantee for the algorithm. For the non-star variant, we enter feedback loop, where the bias towards accepting insertions as opposed to deletions makes the tree large, which increases the time to optimize, which then causes the tree size to drift even higher. Thus, the analysis informs our operator design and we can make a principled decision to use the star variant.} 


Solving \MAJORITY, we see the opposite effect: \oneonegp very handily beats \oneonegps in terms of expected optimization time. Neutral moves have the effect of balancing both the relative frequency of variables and the number of positive versus negative occurrences. This draws us toward a very favorable average case where every variable is either expressed or very close to being expressed. If neutral moves are not accepted, improvement can frequently stagnate, underscoring the fact that there are large flat regions in the search space. \oneonegp is better equipped to escape these than \oneonegps, so one should, in fact, clearly choose \oneonegp so that there is a guarantee of termination and to avoid the exponential-time worst-case associated with \oneonegpsmulti.


Overall, these results highlight the fact that, in choosing whether or not to accept neutral moves, one should consider their general effect with respect to both the fitness landscape and growth in tree size. Tying this knowledge into expected optimization time also requires an understanding of the mechanisms by which the fitness increases. We recognize that for \ORDER and \MAJORITY, this is much easier to do rigorously than for more realistic problems. However, perhaps our exercise with \ORDER and \MAJORITY can provide intuitive insight to GP practitioners.


\subsection{Mutation}\label{sect:mutateDiscussion}

Our results also tell us more about our \hvlMutateTwoPointOh framework, and show that it has several interesting properties and behaves quite differently for the two problems.

Interestingly, the analysis for \ORDER, which uses the fitness partition method, only relies on the use of insert. However, we could not run the algorithm with only insertion, because the tree would get very large and the expected time to termination would actually become infinite, in the absence of a strict bound on the tree size. Therefore, deletions are necessary to control the size of the tree, if nothing else. We could, however, envision designing an alternative operator without substitution, whose insertion and deletion probabilities are imbalanced. By choosing these probabilities appropriately, we could prevent the tree size from getting too large (without explicitly bounding it) while still doing as many insertions as possible, thus allowing the algorithm to reach the optimum with the lowest number of fitness evaluations.

For \MAJORITY, the substitution decomposability property indicates that, for this particular problem, the substitution operation is more complex than insertion and deletion and is, in fact, a macro operator which is a combination of the two. A superficial glance at the operator does not necessarily reveal this; in fact, it is tempting to believe that substitution is generally the least complex of the operators, because it most closely resembles a bit-flip in a fixed-length representation and does not change the size or structure of the GP tree at all. However, it could be beneficial to dispense with substitution altogether; this would simplify some of the analysis and achieve the goal of making our mutation operator as ``local'' as possible for the given problem.

Locality is a property that depends on fitness landscape and operator. Here we see the interaction explicitly and reflect upon the influence of the fitness landscape, which itself depends upon the genotype to phenotype mapping. A substitution  makes the same genotypic change because \MAJORITY and \ORDER share the same primitive set. But in \MAJORITY, to a first order approximation, the amortized average change in fitness is larger than in \ORDER because \ORDER's expression mechanism places emphasis on only the front of the parsed leaf list whereas \MAJORITY's depends on the entire set of leaves.

Both problems also reveal a fundamental asymmetry between deletions and insertions. Insertions select uniformly from the set of possible terminals, so each terminal is affected with the same probability, but deletions select uniformly from the set of leaves, so the probability of the operation changing a particular terminal depends on the concentration of that terminal in the tree. In the case of \ORDER, this has ramifications for the evolution of the tree size over time, because insertions end up being less likely to decrease the fitness than deletions, so the tree grows over time. For \MAJORITY, this phenomenon has a positive effect, rather than a negative one: if there are more occurrences of a negated variable than its corresponding positive variable, we will tend to remove those negations with higher probability, and simultaneously balance the relative concentration of each variable.

\subsection{Informing GP Practice}\label{sect:method}

This analysis also prompts one to revisit and review assumptions about the necessity of a population and a crossover operator in GP. It does not imply that they are unnecessary but it explicitly shows, at least, simple circumstances when they are not.  This may advise GP practitioners to assure themselves empirically that a population and a crossover operator are needed (alone or together) when they start their algorithm development. These algorithms and operators are simple and easy to code, yielding a quick solution to the problem while at the same time supporting parallelism and featuring similar efficiency to conventional GP.  Even though the problem at hand is not likely to have the simple problem structure observed here and even though it may require a more sophisticated operator, starting from a provably correct algorithm provides a platform for rationally exploring how to address the separate challenges presented by a harder problem or designing an operator specifically for the problem.

Additionally, this analysis contrasts with conventional GP design practice. Conventionally, GP design proceeds in a very practical manner, but one which is antithetical to theoretical algorithm designers. Rather than derive an algorithm that is provably correct and of efficient complexity, practitioners use biological inspiration, empirical insight and current GP theory. This current theory tries to provide transparent explanations of how GP bloats, how it constructs solutions from schema and how it navigates a fitness landscape with its operators and selection. The resulting heuristic can be expected to generate initial mixed results and require subsequent trial and error to ``perfect'' its use on the problem of interest. There exist some best practices and rules of thumb for robust algorithms but little useful guidance on algorithm customization (via, e.g. genetic operators) for this subsequent design phase. The process finally yields a heuristic which, though a randomized algorithm,  is intractable to analyze post-hoc for correctness or efficiency. One can offer to a user its computational expense, which is the product of population size, generations, and number of runs, as well as some empirical estimate of the likelihood of finding a sufficiently optimal solution on a future problem instance.  An open question is whether the algorithm design methodology taken in this contribution, that of algorithmic theoreticians, could be blended with or complement the method of practice. Our methodology yields a fundamentally different new form of theoretical result for GP: a randomized algorithm of established computational efficiency (which is different from computational expense), that is guaranteed to find a solution. However, our analysis is tractable only because the fitness structure of the model problems is simple, and the algorithms use only a simple hierarchical variable length mutation operator. It is an open question as to whether the first pass application of the simple algorithms and operators on a realistic problem might prove useful for insight or well founded design choices. Forums such as the annual Genetic Programming: From Theory to Practice workshop which encourage explicit interactions among theoreticians and practitioners may encourage the investigation of this question and provide a means of collecting the experiences.

\subsection{Analysis Techniques}\label{sec:disc_analysis}

We also comment briefly on our analysis techniques. The analysis of \ORDER used the method of fitness partitions~\cite{DJWoneone}, a very general method that has found many applications in the complexity analysis of EAs for binary representation. The method was successful because, in \ORDER, we are always only one move away from expressing a particular variable. However, \MAJORITY required different analysis techniques because, unlike in \ORDER, there is not an easily computable probability of improving the fitness given only the fitness value; there is also a crucial dependence on the neutral moves we make. The coupon collector and random walk method considered optimizing all of the variables jointly, and in doing so achieved a bound on performance very close to the theoretical lower bound. This indicates that perhaps more complex fitness functions that are not separable might be admissible to similar styles of analysis.

\subsection{Future Work}\label{sec:conclusion}

We see three main directions for future work in the computational complexity analysis of genetic programming. Obviously the goal of bridging a gap from what exists to practice is daunting. However, modest steps forward may be revealing.  The first extension is to increase the complexity of the genetic operators that are acting on these two problems. Our $1+1$ operators are essentially just stochastic hill climbers, and while understanding of such an optimization technique is valuable in and of itself, real-world GP implementations clearly involve more individuals. From GA theory, there is a precedent of taking $1+1$ analysis of a problem and extending it to $\mu+1$ analysis, where $\mu$ is the size of the population (see \eg \cite{DBLP:journals/ec/Witt06}). This would admit tree-based crossover operators (if they can be shown to be necessary for an efficient optimization time).

The other extension is to consider harder problems. While \ORDER and \MAJORITY each capture a couple very simple properties of a program, both rely upon inspection and neither problem's fitness function takes into account the hierarchical nature of a GP tree, which is of crucial importance for all practical applications of GP. We could extend these problems in several ways to address the latter issue. One could keep the same terminal set and join nodes, but make the fitness function take subtrees into account somehow. Alternatively, one could introduce a new type of join operation and use the fitness function to impose a constraint that forces us to optimize the higher levels of the tree as well, perhaps by giving higher fitness to individuals when this ``join prime'' has regular joins as its children and vice versa. Either of these changes would increase the interest of the problem and make the results more relevant to the way that GP is used in practice. However, any new problem in this form may require new or modified mutation operators to be admissible to analysis. We could also try to extend the difficulty of \MAJORITY by designing a new objective that requires the correct material (and no incorrect material) to be in the right order, in addition to being present.  A model problem which abstracts the semantics of iteration might also provide insights, if tractable. Whether the currently used proof techniques--fitness partitioning, random walks and use of the coupon collector--are sufficient to address more challenging setups is an open question.

\end{document}